\def\eqref#1{equation~\ref{#1}}
\def\1{\bm{1}}
\DeclareMathAlphabet{\mathsfit}{\encodingdefault}{\sfdefault}{m}{sl}
\SetMathAlphabet{\mathsfit}{bold}{\encodingdefault}{\sfdefault}{bx}{n}
\newtheorem{hypothesis}{Hypothesis}[section]
\newtheorem{theorem}{Theorem}[section]
\title{ICLR: In-Context Learning of Representations}
\author{Core Francisco Park${\thanks{Equal contribution. Contact: \texttt{\{corefranciscopark,andrewlee\}@g.harvard.edu}, \texttt{yongyi@umich.edu}, \texttt{\{ekdeeplubana, hidenori\_tanaka\}@fas.harvard.edu}.}}$ $^{1,2,3}$ , Andrew Lee$^{*4}$, Ekdeep Singh Lubana$^{*1,3}$, Yongyi Yang$^{*1,3,5}$,\\ \textbf{Maya Okawa$^{1,3}$, Kento Nishi$^{1,4}$, Martin Wattenberg$^{4}$, \& Hidenori Tanaka$^{1,3}$}\\
$^1$CBS-NTT Program in Physics of Intelligence, Harvard University\\
$^2$Department of Physics, Harvard University\\
$^3$Physics \& Informatics Lab, NTT Research Inc.\\
$^4$SEAS, Harvard University\\
$^5$CSE, University of Michigan, Ann Arbor\\
}
\begin{document}

\maketitle

\begin{abstract}
Recent work has demonstrated that semantics specified by pretraining data influence how representations of different concepts are organized in a large language model (LLM).
However, given the open-ended nature of LLMs, e.g., their ability to in-context learn, we can ask whether models alter these pretraining semantics to adopt alternative, context-specified ones.
Specifically, if we provide in-context exemplars wherein a concept plays a different role than what the pretraining data suggests, do models reorganize their representations in accordance with these novel semantics?
To answer this question, we take inspiration from the theory of \textit{conceptual role semantics} and define a toy ``graph tracing'' task wherein the nodes of the graph are referenced via concepts seen during training (e.g., \texttt{apple}, \texttt{bird}, etc.) and the connectivity of the graph is defined via some predefined structure (e.g., a square grid).
Given exemplars that indicate traces of random walks on the graph, we analyze intermediate representations of the model and find that \textit{as the amount of context is scaled, there is a sudden re-organization from pretrained semantic representations to} \textbf{in-context representations} \textit{aligned with the graph structure.}
Further, we find that when reference concepts have correlations in their semantics (e.g., \texttt{Monday}, \texttt{Tuesday}, etc.), the context-specified graph structure is still present in the representations, but is unable to dominate the pretrained structure.
To explain these results, we analogize our task to energy minimization for a predefined graph topology, providing evidence towards an implicit optimization process to infer context-specified semantics.
Overall, our findings indicate scaling context-size can flexibly re-organize model representations, possibly unlocking novel capabilities.
\end{abstract}

\section{Introduction}
\label{sec:intro}
A growing line of work demonstrates that large language models (LLMs) organize representations of specific concepts in a manner that reflects their structure in pretraining data~\citep{park2024geometry, park2024linearrepresentationhypothesisgeometry, engels2024languagemodelfeatureslinear, abdou2021can, patel2022mapping, scalingmonosemanticity, gurnee2023language, vafa2024evaluating, li2021implicit, pennington2014glove}.
More targeted experiments in synthetic domains have further corroborated these findings, showing how model representations are organized according to the data-generating process~\citep{li2022emergent, jenner2024evidence, traylor2022can, liu2022towards, shai2024transformers, park2024emergencehiddencapabilitiesexploring, gopalani2024abrupt}.
However, when a model is deployed in open-ended environments, we can expect it to encounter novel semantics for a concept that it did not see during pretraining. 
For example, assume that we describe to an LLM that a new product called \texttt{strawberry} has been announced. 
Ideally, based on this context, the model would alter the representation for \texttt{strawberry} and reflect that we are not referring to the pretraining semantics (e.g., the fruit strawberry). 
Does this ideal solution transpire in LLMs?

Motivated by the above, we evaluate whether when provided an in-context specification of a concept, an LLM alters its representations to reflect the context-specified semantics.
Specifically, we propose an in-context learning task that involves a simple ``graph tracing'' problem wherein the model is shown edges corresponding to a random traversal of a graph (see Fig.~\ref{fig:board_pca}).
The nodes of this graph are intentionally referenced via concepts the model is extremely likely to have seen during training (e.g., \texttt{apple}, \texttt{bird}, etc.), while its connectivity structure is defined using a predefined geometry that is ambivalent to correlations between concepts' semantics (e.g., a square grid).
Based on the provided context, the model is expected to output a valid next node prediction, i.e., a node connected to the last presented one.
\textit{As we show, increasing the amount of context leads to a sudden re-organization of representations in accordance with the graph's connectivity.} 
This suggests LLMs can manipulate their representations in order to reflect concept semantics specified \textit{entirely in-context}, inline with theories of inferential semantics from cognitive science~\citep{harman1982conceptual, block1998semantics}.
We further characterize these results by analyzing the problem of Dirichlet energy minimization, showing that models indeed identify the structure of the underlying graph to achieve a non-trivial accuracy on our task.
This suggests an implicit optimization process, as hypothesized by theoretical work on ICL in toy setups (e.g., in-context linear regression), can transpire in more naturalistic settings~\citep{von2023transformers, von2023uncovering, akyürek2023learningalgorithmincontextlearning}.
Overall, our contributions can be summarized as follows.

\begin{figure}
\begin{center}
\includegraphics[width=\textwidth]{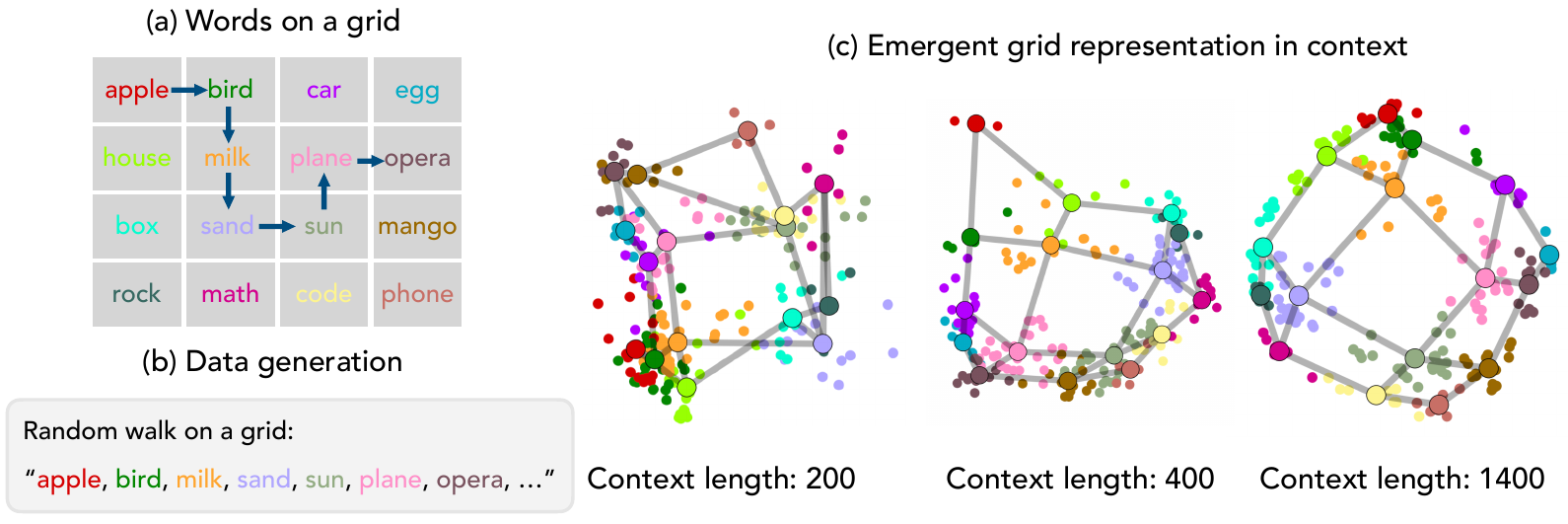}
\end{center}
\vspace{-10pt}
\caption{\textbf{Alteration of representations in accordance with context-specified semantics (grid structure).}
(a) We randomly arrange a set of concepts on a grid that does not reflect any correlational semantics between the tokens. 
(b) We then generate sequences of tokens following a random walk on the grid, inputting it as context to a Llama-3.1-8B model.
(c) The model's mean token representations projected onto the top two principal components. As the number of in-context exemplars increases, there is a formation of representations mirroring the grid structure underlying the data-generating process. Representations are from the residual stream activation following layer 26.
\vspace{-10pt}
}
\label{fig:board_pca}
\end{figure}

\begin{itemize}[leftmargin=10pt, itemsep=2pt, topsep=2pt, parsep=2pt, partopsep=1pt]

    \item \textbf{Graph Navigation as a Simplistic Model of Novel Semantics.} 
    We introduce a toy graph navigation task that requires a model to interpret semantically meaningful concepts as referents for nodes in a structurally constrained graph. 
    Inputting traces of random walks on this graph into an LLM, we analyze whether the model alters its intermediate representations for referent concepts to predict valid next nodes as defined by the underlying graph connectivity, hence inferring, inline with theories of semantics from cognitive science, novel semantics of a concept~\citep{harman1982conceptual}.

    \item \textbf{Emergent In-Context Reorganization of Concept Representations.} 
    Our results show that as context-size is scaled, i.e., as we add more exemplars in context, there is a sudden re-organization of concept representations that reflects the graph's connectivity structure. 
    Intriguingly, these results are similar to ones achieved in a similar setup with human subjects~\citep{humans1, humans2}.
    Further, we show the context-specified graph structure emerges even when we use concepts that have correlations in their semantics (e.g., \texttt{Mon}, \texttt{Tues}, etc.), but, interestingly, is unable to dominate the pretrained structure.
    More broadly, we note that this sudden reorganization is reminiscent of emergent capabilities in LLMs when other relevant axes, e.g., compute or model size, are scaled~\citep{wei2022emergent, srivastava2022beyond, lubana2024percolation}.

    \item \textbf{An Energy Minimization Model of Semantics Inference.}
    To provide a more quantitative account of our results, we compute the Dirichlet energy of model representations with respect to the ground-truth graph structure, and find the energy decreases as a function of context size.
    This offers a precise hypothesis for the mechanism employed by an LLM to re-organize representations according to the context-specified semantics of a concept. 
    These results also serve as evidence towards theories of in-context learning as implicit optimization in a more naturalistic setting~\citep{von2023transformers, von2023uncovering, akyürek2023learningalgorithmincontextlearning}.
    
\end{itemize}

\section{Experimental Setup: In-Context Graph Tracing}
\label{sec:experiment_setup}

\begin{figure}
\begin{center}
\includegraphics[width=0.95\textwidth]{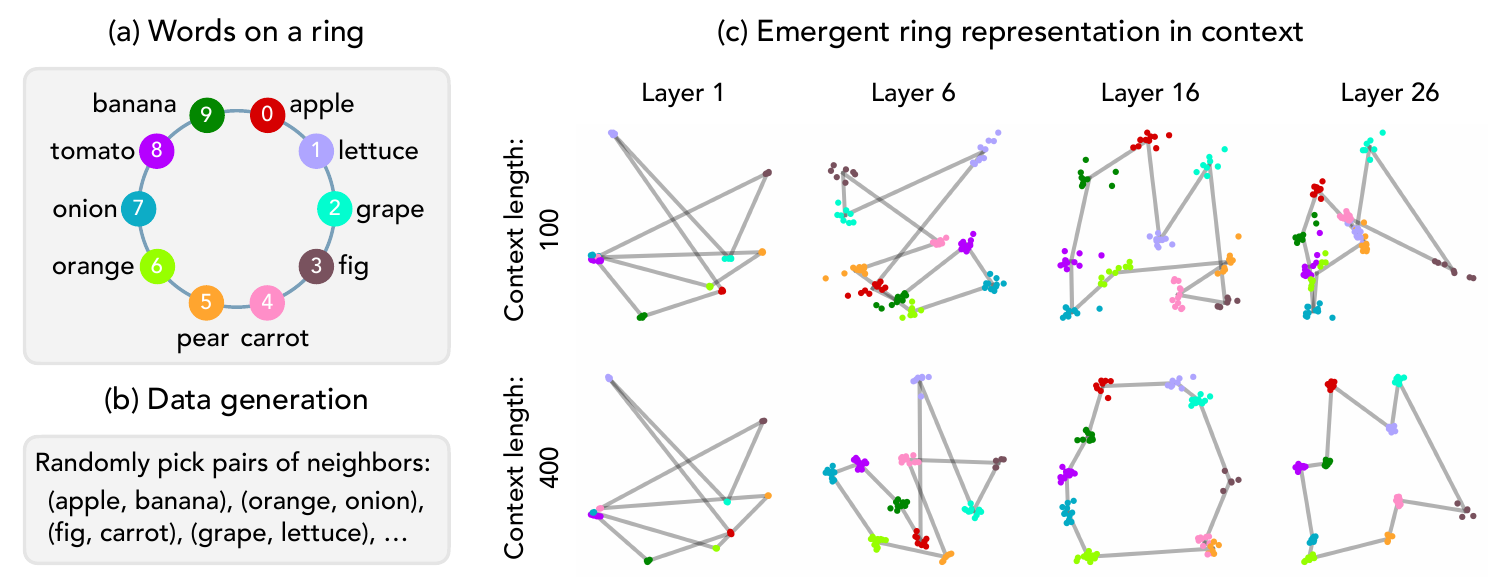}
\end{center}
\caption{
\textbf{Alteration of representations in accordance with context-specified semantics (ring structure).}
(a) We randomly place concepts on a ring structure unrelated to their semantics.
(b) We then generate sequences of tokens by randomly sampling \emph{neighboring pairs} from the ring which is used as the input context to a Llama-3.1-8B model.
(c) The model's mean representation of tokens projected onto the top two principal components. As the number of in-context exemplars increases, there is a formation of representations mirroring the ring structure underlying the data-generating process. The representations are from the residual stream activations.
}
\label{fig:ring_pca}
\end{figure}

We first define our setup for assessing the impact of context specification on how a model organizes its representations.
In the main paper, we primarily focus on Llama3.1-8B (henceforth Llama3) \citep{dubey2024llama3herdmodels}, accessed via NDIF/NNsight \citep{fiottokaufman2024nnsightndifdemocratizingaccess}. 
We present results on other models---Llama3.2-1B / Llama3.1-8B-Instruct~\citep{dubey2024llama3herdmodels} and Gemma-2-2B / Gemma-2-9B ~\citep{gemmateam2024gemma2improvingopen}---in App.~\ref{appx_subsec:more_models}.

\textbf{Task.} Our proposed task, which we call \textit{in-context graph tracing}, involves random walks on a predefined graph $\mathcal{G}$.
Specifically, inspired by prior work analyzing structured representations learned by sequence models, we experiment with three graphical structures: a square grid (Fig.~\ref{fig:board_pca}~(a)), a ring (Fig.~\ref{fig:ring_pca}~(a)), and a hexagonal grid (Fig.~\ref{fig:detailed_pca_hex}).
Results on hexagonal grid are deferred to appendix due to space constraints.
To construct the square grid, we randomly arrange the set of tokens in a grid and add edges between horizontal and vertical neighbors.
We then perform a random walk on the graph, emitting the visited tokens as a sequence (Fig.~\ref{fig:board_pca}~(b)).
For the ring, we add edges between neighboring nodes and simply sample random pairs of neighboring tokens on the graph (Fig.~\ref{fig:ring_pca}~(b)). 
Nodes in our graphs, denoted $\mathcal{T} = \{ \tau_0, \tau_1, \dots , \tau_n \}$, are referenced via concepts that the model is extremely likely to have seen during pretraining. 
While any choice of concepts is plausible, we select random tokens that, unless mentioned otherwise, have no obvious semantic correlations with one another (e.g., \texttt{apple}, \texttt{sand}, \texttt{math}, etc.).
However, these concepts have precise meanings associated with them in the training data, necessitating that to the extent the model relies on the provided context, the representations are morphed according to the in-context graph.
We highlight that a visual analog of our task, wherein one uses images instead of text tokens to represent a concept, has been used to elicit \textit{very similar results with human subjects as the ones we report in this paper using LLMs}~\citep{humans1, humans2, humans4, humans3, brady2009compression}.
We also note that our proposed task is similar to ones studied in literature on in-context RL, wherein one provides exploration trajectories in-context to a model and expects it to understand the environment and its dynamics (a.k.a., a world model)~\citep{lee2024supervised, laskin2022context}.

\section{Results}
\label{sec:resultsv1}

\subsection{Visualizing internal activation using principal components}\label{subsec:visualizing_with_pca}
Since we are interested in uncovering context-specific representations, we input sequences from our data-generating process to the model and first compute the mean activations for each unique token $\tau \in \mathcal{T}$.
Namely, assume a given context $\mathcal{C}:= [c_0, ..., c_{N-1}]$, where $c_i \in \mathcal{T}$, that originates from an underlying graph $\mathcal{G}$.
At each timestep, we look at a window of $N_w$ (=50) preceding tokens (or all tokens if the context length is smaller than $N_w$), and collect all activations corresponding to each token $\tau \in \mathcal{T}$ at a given layer $\ell$.
We then compute the mean activations per token, denoted as $\boldsymbol{h}_\tau^{\ell} \in \mathbb{R}^d$.
We further denote the stack of mean token representations as $\boldsymbol{H}^\ell(\mathcal{T}) \in \mathbb{R}^{n \times d}$.
Finally, we run PCA on $\boldsymbol{H}^{\ell}(\mathcal{T})$, and use the first two principal components to visualize model activations (unless stated otherwise).
We note that while PCA visualizations are known to suffer from pitfalls as a representation analysis method, we provide a thorough quantitative analysis in Sec.~\ref{sec:emergence} to demonstrate that the model re-organizes concept representations according to the in-context graph structure, and prove in Sec.~\ref{sec:deep_analysis} that the structure of the graph is reflected in the PCA visualizations \textit{because} of this re-organization of representations.
We also provide further evidence on the faithfulness of PCA by conducting a preliminary causal analysis of the principal components, finding that intervening on concept representations' projections along these components affects the model's ability to accurately predict valid next node generations (App.~\ref{appx_subsec:interventions}).

\textbf{Results.} Figs.~\ref{fig:board_pca},~\ref{fig:ring_pca} demonstrate the resulting visualizations for square grid and ring graphs, respectively (more examples are provided in the Appendix; see Fig.~\ref{fig:detailed_pca_grid}, \ref{fig:detailed_pca_hex}).
Strikingly, with enough exemplars, we find representations are in fact organized in accordance with the graph structure underlying the context.
Interestingly, results can be skewed in the earlier layers towards semantic priors the model may have internalized during training; however, these priors are overridden as we go deeper in the model.
For example, in the ring graph (see Fig.~\ref{fig:ring_pca}), concepts \texttt{apple} and \texttt{orange} are closer to each other in Layer 6 of the model, but become essentially antipodal around layer 26, as dictated by the graph; the antipodal nature is also more prominent as context length is increased.

We also observe that despite developing a square-grid structure when sufficient context length is given (see Fig.~\ref{fig:board_pca}), the structure is partially irregular; e.g., it is wider in the central regions, but narrowly arranged in the periphery. 
We find this to be an artifact of frequency with which a concept is seen in the context.
Specifically, due to lack of periodic boundary conditions, concepts that are present in the inner 2$\times$2 region of the grid are visited more frequently during a random walk on the graph, while the periphery of the graph has a lower visitation frequency.
The representations reflect this, thus organizing in accordance with both structure and frequency of concepts in the context.

Overall, the results above indicate that \textit{as we scale context size, models can re-organize semantically unrelated concepts to form task-specific representations, which we call \textbf{in-context representations}}.
Intriguingly, these results are broadly inline with theories of inferential semantics from cognitive science as well~\citep{harman1982conceptual, block1998semantics}.

\subsection{Semantic Prior vs. In-Context Task Representations}\label{subsec:semantic_prior_vs_in_context}

\begin{figure}
\begin{center}
\includegraphics[width=0.9\textwidth]{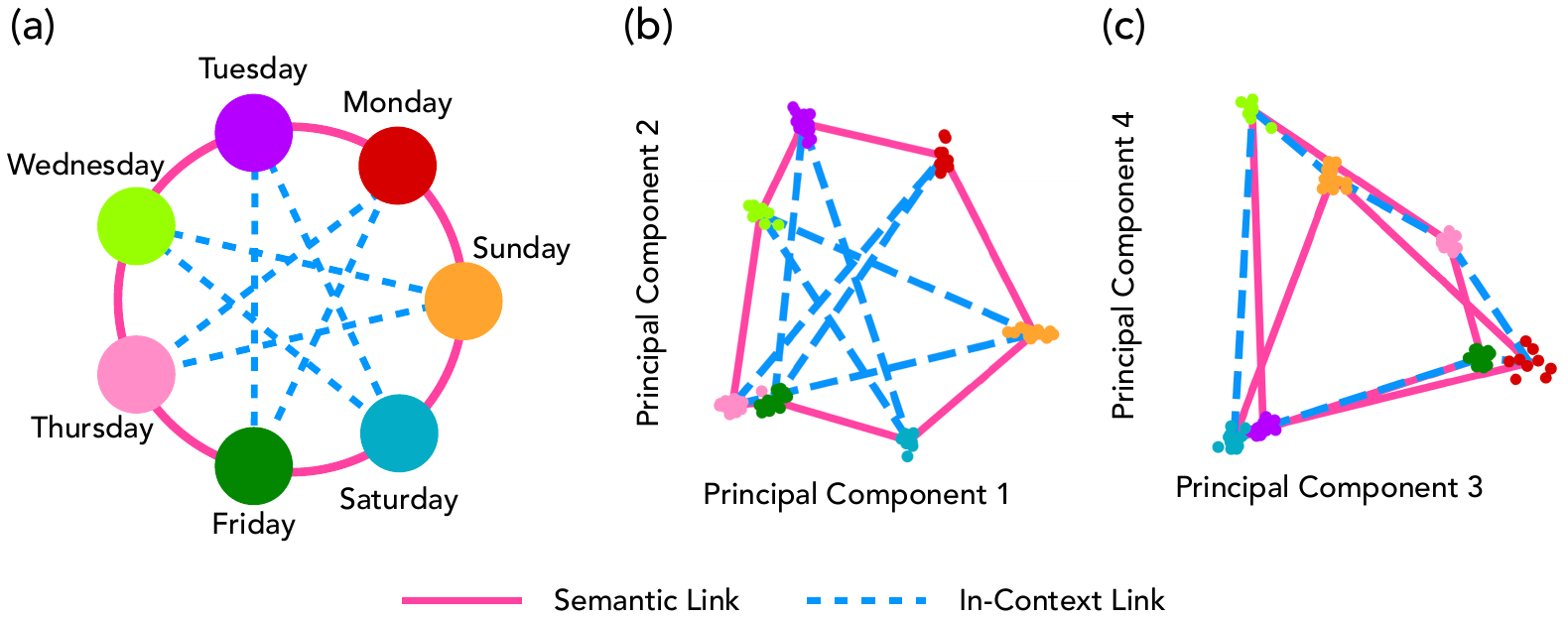}
\end{center}
\caption{
\textbf{In-context representations form in higher principal components in the presence of semantic priors.}
(a) (Purple) Semantic links underlying days of the week.
(Dashed blue) We define a non-semantic graph structure by linking non-neighboring days and generate tokens from this graph.
(b) (Purple) The ring geometry formed by semantic links established during pre-training remains intact in the first two principal components.
(c) (Dashed blue) The non-semantic structure provided in-context can be seen in the third and fourth principal components. Note that the star structure in the first two components (b), which match the ground truth graphical structure of our data generating process (a), becomes a ring in the next two principal components (c). The representations are from the residual stream activation following layer 21.}
\label{fig:semantic_ring}
\end{figure}

Building on results from the previous section, we now investigate the impact of using semantically correlated concepts.
Specifically, we build on the results from \citet{engels2024languagemodelfeatureslinear}, who show that representations for days of the week, i.e., \{\texttt{Monday, Tuesday, Wednesday, Thursday, Friday, Saturday, Sunday}\}, organize in a circular geometry.
We randomly permute the ordering of these concepts, arrange them on a 7-node ring graph similar to the previous section (see Fig.~\ref{fig:semantic_ring}a), and evaluate whether the in-context representations can override the strong pretraining prior internalized by the model.

\paragraph{Results.} Fig.~\ref{fig:semantic_ring} (b, c) demonstrate the resulting visualizations.
We find that when there is a conflict between the semantic prior and in-context task, we observe the \textit{original semantic ring} in the first two principal components.
However, the components right after in fact encode the context-specific structure: visualizing the third and fourth principal components shows the newly defined ring structure.
This indicates that the context-specified structure is present in the representations, but does not dominate them.
In Fig.~\ref{fig:semantic_ring_acc}, we report the model's accuracy on the in-context task, finding that the model overrides the semantic prior to perform well on our task when enough context is given.

\section{Effects of Context Scaling: Emergent Re-Organization of Representations}
\label{sec:emergence}

Our results in the previous section demonstrate models can re-organize concept representations in accordance with the context-specified semantics.
We next aim to study how this behavior arises as context is scaled---is there a continuous, monotonic improvement towards the context-specified structure as context is added? 
If so, is there a trivial solution, e.g., regurgitation based on context that helps explain these results?
To analyze these questions, we must first define a metric that helps us gauge how aligned the representations are with the structure of the graph that underlies the context.

\paragraph{Dirichlet Energy.} We measure the \textit{Dirichlet energy} of our graph $\mathcal{G}$'s structure by defining an energy function over the model representations.
Specifically, for an undirected graph $\mathcal{G}$ with $n$ nodes, let $ {\boldsymbol{A}}\in \mathbb R^{n \times n}$ be its adjacency matrix, and ${\boldsymbol{x}}\in \mathbb R^{n}$ be a signal vector that assigns a value $x_i$ to each node $i$.
Then the Dirichlet energy of the graph with respect to ${\boldsymbol{x}}$ is defined as
\begin{align}
E_{\mathcal{G}}({\boldsymbol{x}}) = \sum_{i,j} \boldsymbol{A}_{i,j} (x_i - x_j)^2.
\end{align}

For a multi-dimensional signal, the Dirichlet energy is defined as the summation of the energy over each dimension. 
Specifically, let ${\boldsymbol{X}} \in \mathbb R^{n \times d}$ be a matrix that assigns each node $i$ with a $d$-dimensional vector ${\boldsymbol{x}}_i$, then the Dirichlet energy of ${\boldsymbol{X}}$ is defined by
\begin{align}
    E_\mathcal{G}({\boldsymbol{X}}) = \sum_{k=1}^d \sum_{i,j} \boldsymbol{A}_{i,j} (x_{i,k} - x_{j,k})^2 = \sum_{i,j} \boldsymbol{A}_{i,j} \|{\boldsymbol{x}}_i - {\boldsymbol{x}}_j\|^2.
\end{align}

Overall, to empirically quantify the formation of geometric representations, we can measure the Dirichlet energy with respect to the graphs underlying our data generating processes (DGPs) and our mean token activations $\boldsymbol{h}_\tau^\ell$:
\begin{align}
    E_\mathcal{G}({\boldsymbol{H}^\ell(\mathcal{T})}) = \sum_{i, j} \boldsymbol{A}_{i, j} \|{\boldsymbol{h}}_i^\ell - {\boldsymbol{h}}_j^\ell\|^2,
\end{align}
where $\boldsymbol{H}^\ell(\mathcal{T}) \in \mathbb{R}^{n \times d}$ is the stack of our mean token representations $\boldsymbol{h}^\ell$ at layer $\ell$ and $i, j \in \mathcal{T}$ are tokens from our DGP at a certain context length. We note ${\boldsymbol{H}}^\ell(\mathcal T)$ is a function of context length as well, but we omit it in the notation for brevity. 
Intuitively, the measure above indicates whether neighboring tokens (nodes) in the ground truth graph have a small distance between their representations.
\textit{Thus, as the model correctly infers the correct underlying structure, we expect to see a decrease in Dirichlet energy.}
We do note that, in practice, Dirichlet energy minimization has a trivial solution where all nodes are assigned the same representation. While we can be confident this trivial solution does not exist in our results, for else we would not see distinct node representations in PCA visualizations nor high accuracy for solving our tasks, we still provide an alternative analysis in App.~\ref{app:standard_energy} where the representations are standardized (mean-centered and normalized by variance) to render this trivial solution infeasible. 
We find results are qualitatively similar with such standardized representations, but more noisy since standardization can induce sensitivity to noise.

\begin{figure}
\begin{center}
\vspace{-10pt}
\includegraphics[width=0.97\textwidth]{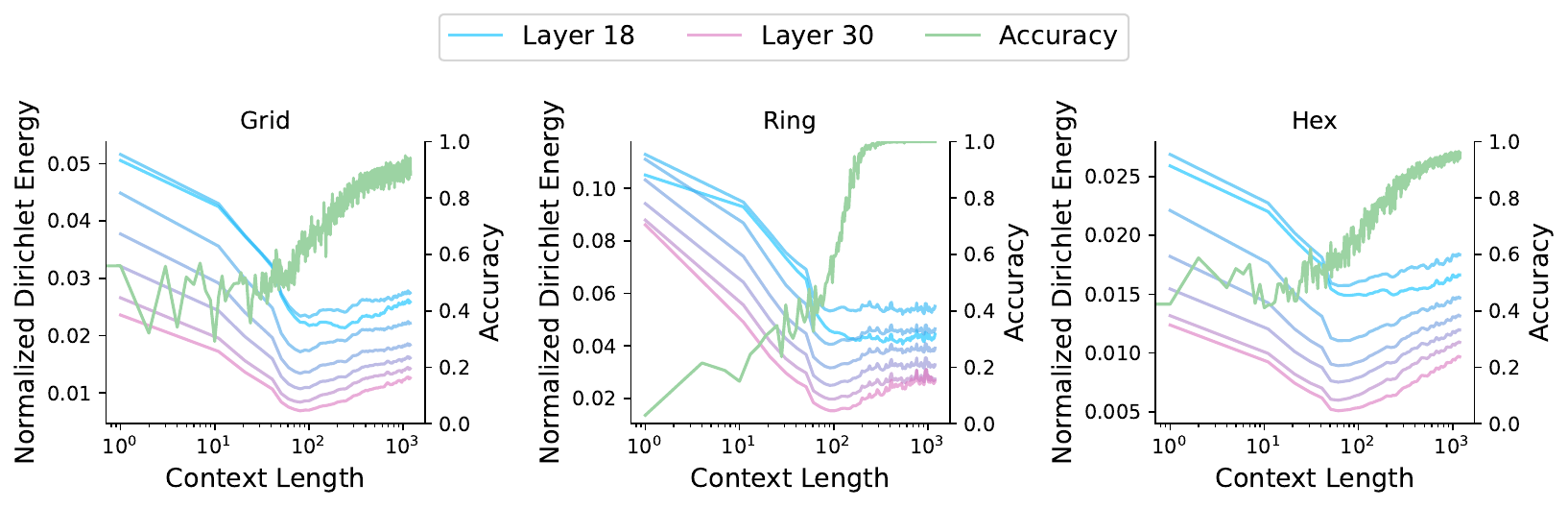}
\end{center}
\caption{
\textbf{A model continuously develops task representation as it learns to traverse novel graphs in-context.}
We plot the accuracy of graph traversal and the Dirichlet energy of the graph, computed from the model's internal representations, as functions of context length. We note that the Dirichlet energy never reaches a perfect zero---ruling out that the representations are learning a degenerate structure, as was also seen in the PCA visualizations in Sec.~\ref{sec:resultsv1}. (a) A 4x4 grid graph with 16 nodes. (b) A circular ring with 10 nodes. (c) A ``honey-comb'' hexagonal lattice, with 30 nodes.
\vspace{-10pt}
}
\label{fig:accuracy_vs_energy}
\end{figure}

\subsection{Results: Emergent Organization and Task Accuracy Improvements}\label{subsec:accuracy}

We plot Llama3's accuracy at the in-context graph tracing task alongside the Dirichlet energy measure (for different layers) as a function of context.
Specifically, we compute the ``rule following accuracy'', where we add up the model's output probability over all graph nodes which are valid neighbors. 
For instance, if the graph structure is \texttt{apple-car-bird-water} and the current state is \texttt{car}, we add up the predicted probabilities for \texttt{apple} and \texttt{bird}. 
This metric simply measures how well the model abides by the graph structure.

Results are reported in Fig.~\ref{fig:accuracy_vs_energy}. 
We see once a critical amount of context is seen by the model, accuracy starts to rapidly improve. 
We find this point in fact closely matches when Dirichlet Energy reaches its minimum value: energy is minimized shortly before the rapid increase in in-context task accuracy, suggesting that the structure of the data is correctly learned before the model can make valid predictions.
This leads us to the claim that \textit{as the amount of context is scaled, there is an emergent re-organization of representations that allows the model to perform well on our in-context graph tracing task}.
We note these results also provide a more quantitative counterpart of our PCA visualization results before.

\begin{figure}
\begin{center}
\includegraphics[width=0.8\textwidth]{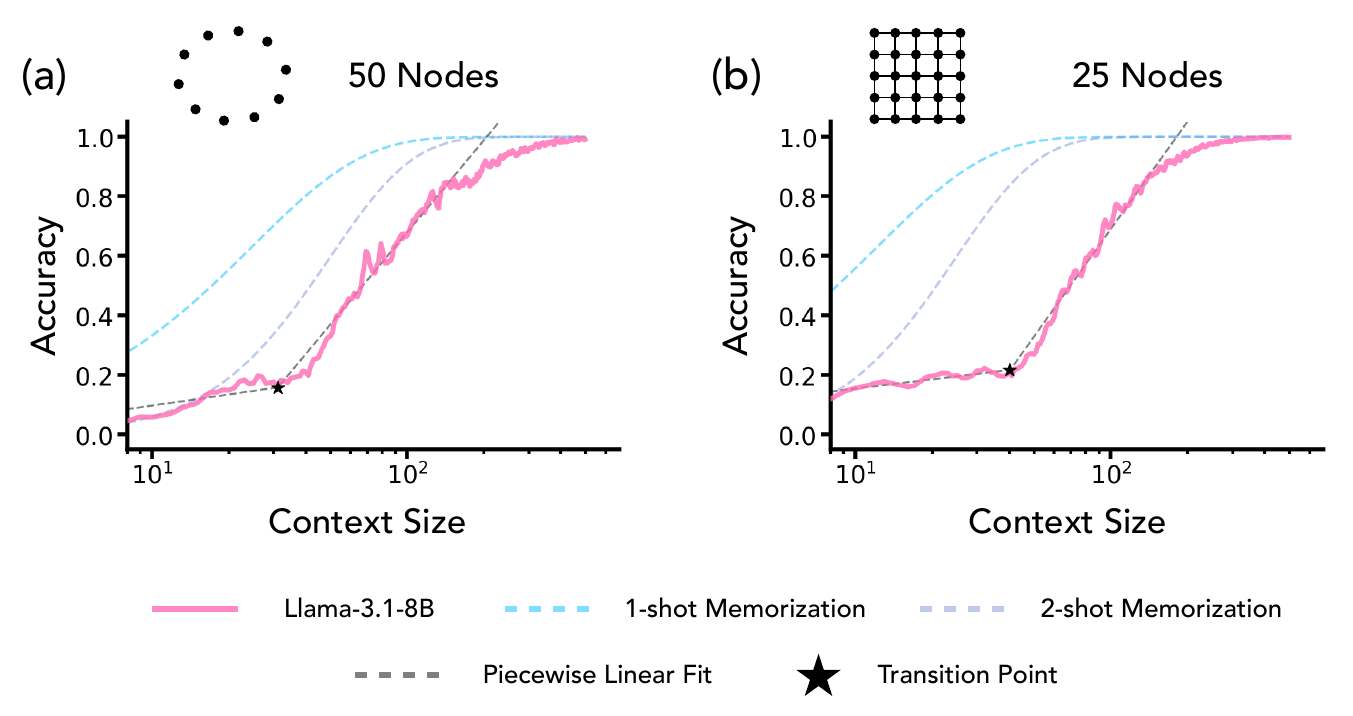}
\end{center}
\caption{
\textbf{A memorization solution cannot explain Llama's ICL graph tracing performance.} We plot the rule-following accuracy from Llama-3.1-8B outputs and accuracies from a simple 1-shot and 2-shot memorization hypothesis. (a) A ring graph with 50 nodes. (b) A square grid graph with 25 nodes. In both cases, we find that the memorization solution cannot explain the accuracy ascent curve. Instead, we find a slow phase and a fast phase, which we fit with a piecewise linear fit. 
}
\label{fig:trivial_solution}
\end{figure}

\paragraph{Is there a Trivial Solution at play?} A simple baseline that would exhibit an increase in performance with increasing context involves the model merely regurgitating a node's neighbors by copying them from its context.
We call this the \textit{memorization solution}.
While such a solution would not explain the reorganization of representations, we use it as a baseline to show the model is likely engaging in a more intriguing mechanism.
Since our accuracy metric measures rule following, this memorization solution will achieve value 1 if the node has been observed in the context and 0 otherwise.
Following our data sampling process then, if we simply choose an initial node at random with replacement, we can express the probability of a node existing in a context of length $l$ as:
\begin{align}
    p_{\text{seen}1}(\mathbf{x})= 1-\left(\frac{n-1}{n}\right)^l,
\end{align}
where $\mathbf{x}$ is the context and $n$ is the number of nodes available. Note that the current node itself does not matter as the sampling probability is uniform with replacement. We also evaluate another, similar baseline that assumes the same token much be encountered twice for the model to recognize it as an in-context exemplar. To define a closed-form expression for this solution, we have the probability that a node has appeared twice as follows:
\begin{align}
    p_{\text{seen}2}(\mathbf{x})= p_{\text{seen}1}(\mathbf{x})-l\left(\frac{1}{n}\right)^1\left(\frac{n-1}{n}\right)^{(l-1)}.
\end{align}
To evaluate whether the memorization solutions above explain our results, we plot their performance alongside the observed performance of Llama-3. 
Fig.~\ref{fig:trivial_solution} shows the result (a) on a ring graph with 50 nodes and (b) on a grid graph with 25 nodes. We find, in both cases, that neither the 1-shot nor the 2-shot memorization curve can explain the behavior of Llama. Instead, we observe that the accuracy has two phases, a first phase where the accuracy improves very slowly, and a second phase where the log-linear slope suddenly changes to a steeper ascent. We find that a piecewise linear fit can extract this transition point fairly well, which will be of interest in the next section.

\section{Explaining emergent re-organization of representations: The energy minimization hypothesis}
\label{sec:deep_analysis}
Building on the results from previous section, we now put forward a hypothesis for why we are able to identify such structured representations from a model: we hypothesize the model internally runs an \emph{energy minimization process} in search of the correct structural representation of the data 
\citep{yang2022transformers}, similar to claims of implicit optimization in in-context learning proposed by prior work in toy settings~\citep{von2023transformers, von2023uncovering}.
More formally, we claim the following hypothesis.
\begin{hypothesis}
    Let $n$ be the number of tokens, $d$ be the dimensionality of the representations, and ${\boldsymbol{H}}^{(\ell,t)}(\mathcal{T}) \in \mathbb R^{n \times d}$ be the stack of representations  for each token learned by the model at layer $\ell$ and context length $t$, then $ E_\mathcal{G}\left({\boldsymbol{H}}^{(\ell,t)}(\mathcal{T})\right)$ decays with context length $t$.
\end{hypothesis}

\subsection{Minimizers of Dirichlet Energy and Spectral Embeddings.}

We call the $k$-th energy minimizer of $E_\mathcal{G}$ the optimal solution that minimizes $E_\mathcal{G}$ and is orthogonal to the first $k-1$ energy minimizers.
Formally, the energy minimizers $\left\{{\boldsymbol{z}}^{(k)}\right\}_{k=1}^n$ are defined as the solution to the following problem:
\begin{align}
    {\boldsymbol{z}}^{(k)} = \arg \min_{ {\boldsymbol{z}}\in {\mathbb {S}}^{n-1}} E_\mathcal{G}({\boldsymbol{z}}) 
    \\ \text{s.t.}\quad {\boldsymbol{z}} \perp {\boldsymbol{z}}^{(j)}, \forall j \leq k-1,
\end{align}
where ${\mathbb {S}}^{n-1}$ is the unit sphere in $n$ dimensional Euclidean space.
The energy minimizers are known to have the following properties \citep{spielman2019spectral}:
\begin{enumerate}
    \item ${\boldsymbol{z}}^{(1)} = c {\boldsymbol{1}}$ for some constant $c \neq 0$, which is a degenerated solution that assigns the same value to every node; and
    \item If we use $\left(z^{(2)}_i, z^{(3)}_i\right)$ as the coordinate of node $i$, it will be a good planar embedding. We call them (2-dimensional) \textbf{spectral embeddings}.
\end{enumerate}

Spectral embeddings are often used to a draw graph on a plane and in many cases can preserve the structure of the graph \citep{tutte1963draw}.
In Figs.~\ref{fig:spectral-embedding-ring} and \ref{fig:spectral-embedding-grid}, we show the spectral embedding results for a ring graph and a grid graph respectively.
Notice how such spectral embeddings are similar to the representations from our models in Fig.~\ref{fig:board_pca} and \ref{fig:ring_pca}.
\textit{As we show in Theorem \ref{thm:structrual-pca-full}, this is in fact expected if our energy minimization hypothesis is true:} if the representations $\boldsymbol{H}$ from the model minimize the Dirichlet energy and are non-degenerated, then the first two principal components of PCA will exactly produce the spectral embeddings $\boldsymbol{z}^{(2)}, \boldsymbol{z}^{(3)}$.
Here we present an informal version of the theorem, and defer the full version and proof to the appendix.

\begin{theorem}[Informal Version of Theorem \ref{thm:structrual-pca-full}]\label{thm:structrual-pca}
    Let $\mathcal{G}$ be a graph and ${\boldsymbol{H}}\in \mathbb R^{n \times d}$ (where $n \geq d \geq 3$) be a matrix that minimizes Dirichlet energy on $\mathcal{G}$ with non-degenerated singular values, then the first two principal components of ${\boldsymbol{H}}$ will be ${\boldsymbol{z}}^{(2)}$ and ${\boldsymbol{z}}^{(3)}$.
\end{theorem}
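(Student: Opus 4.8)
The plan is to connect the Dirichlet energy $E_\mathcal{G}$ to the graph Laplacian and then reduce the statement about PCA to a statement about eigenvectors of that Laplacian. First I would recall that for the adjacency matrix $\boldsymbol{A}$ and degree matrix $\boldsymbol{D}$, the Laplacian $\boldsymbol{L} = \boldsymbol{D} - \boldsymbol{A}$ satisfies $E_\mathcal{G}(\boldsymbol{x}) = 2\,\boldsymbol{x}^\top \boldsymbol{L}\,\boldsymbol{x}$ (up to the factor coming from summing over ordered pairs $i,j$), so the energy minimizers $\boldsymbol{z}^{(k)}$ of the excerpt are exactly the eigenvectors of $\boldsymbol{L}$ ordered by increasing eigenvalue $0 = \lambda_1 \le \lambda_2 \le \dots$, with $\boldsymbol{z}^{(1)} = c\boldsymbol{1}$. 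For a multi-dimensional signal $\boldsymbol{H} \in \mathbb{R}^{n\times d}$ we have $E_\mathcal{G}(\boldsymbol{H}) = 2\,\Tr(\boldsymbol{H}^\top \boldsymbol{L}\,\boldsymbol{H})$, so minimizing over $\boldsymbol{H}$ (subject to whatever normalization the full theorem imposes on the columns — I'd expect orthonormal columns or a fixed Frobenius norm) is a trace-minimization problem whose optimum is attained by an $\boldsymbol{H}$ whose column space is the span of the $d$ lowest Laplacian eigenvectors $\{\boldsymbol{z}^{(1)}, \dots, \boldsymbol{z}^{(d)}\}$.

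**The main argument.**

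Given that $\boldsymbol{H}$ minimizes the energy, its columns span $V := \operatorname{span}\{\boldsymbol{z}^{(1)}, \dots, \boldsymbol{z}^{(d)}\}$; concretely $\boldsymbol{H} = \boldsymbol{Z}\boldsymbol{R}$ where $\boldsymbol{Z} = [\boldsymbol{z}^{(1)} | \dots | \boldsymbol{z}^{(d)}]$ has orthonormal columns and $\boldsymbol{R} \in \mathbb{R}^{d\times d}$ is some invertible matrix (invertible because the singular values of $\boldsymbol{H}$ are non-degenerate, i.e. $\boldsymbol{H}$ has full column rank $d$). Now PCA on $\boldsymbol{H}$ works with the centered matrix $\tilde{\boldsymbol{H}} = (\boldsymbol{I} - \tfrac{1}{n}\boldsymbol{1}\boldsymbol{1}^\top)\boldsymbol{H}$. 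The key observation is that centering annihilates exactly the $\boldsymbol{z}^{(1)} = c\boldsymbol{1}$ direction: since $\boldsymbol{z}^{(2)}, \dots, \boldsymbol{z}^{(d)}$ are all orthogonal to $\boldsymbol{1}$, the centering projection fixes them and kills $\boldsymbol{z}^{(1)}$. Hence the column space of $\tilde{\boldsymbol{H}}$ is exactly $\operatorname{span}\{\boldsymbol{z}^{(2)}, \dots, \boldsymbol{z}^{(d)}\}$, and its rank is $d-1$.

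**Finishing.**

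The principal components (scores) of $\boldsymbol{H}$ are the left singular vectors of $\tilde{\boldsymbol{H}}$ scaled by singular values; equivalently, they are the eigenvectors of the Gram matrix $\tilde{\boldsymbol{H}}\tilde{\boldsymbol{H}}^\top$, which lie in the column space of $\tilde{\boldsymbol{H}}$, namely $\operatorname{span}\{\boldsymbol{z}^{(2)}, \dots, \boldsymbol{z}^{(d)}\}$. So each principal component is a linear combination of $\boldsymbol{z}^{(2)}, \dots, \boldsymbol{z}^{(d)}$. To pin down that the \emph{top two} PCs are precisely $\boldsymbol{z}^{(2)}$ and $\boldsymbol{z}^{(3)}$ (and not some rotated mixture), I would invoke the non-degeneracy hypothesis on the singular values: one shows that the variances along the PC directions are controlled by the Laplacian eigenvalues $\lambda_2, \lambda_3, \dots$, and the assumption that these are distinct forces the PC directions to align with the individual eigenvectors rather than spanning a degenerate eigenspace. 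This is the step I expect to be the main obstacle: the energy-minimization condition constrains only the \emph{column space} of $\boldsymbol{H}$, not the matrix $\boldsymbol{R}$, so a priori the within-$V$ geometry (hence the PCA ordering inside $V$) is arbitrary — one must either feed in an additional normalization on $\boldsymbol{H}$'s columns in the full theorem statement, or argue that the "$k$-th energy minimizer" structure is inherited column-by-column, so that $\boldsymbol{H}$'s columns are (up to scaling) $\boldsymbol{z}^{(1)}, \dots, \boldsymbol{z}^{(d)}$ themselves and then the distinct-singular-values hypothesis does the rest. I would also need to handle the harmless sign/scaling ambiguity of both PCs and spectral embeddings, and note the degenerate-singular-value caveat is exactly what rules out $\boldsymbol{z}^{(1)}$ reappearing or $\boldsymbol{z}^{(2)}, \boldsymbol{z}^{(3)}$ getting swapped.
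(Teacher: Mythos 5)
You have the right scaffolding, and you are honest about exactly where it breaks: the energy-minimization condition you set up controls only the column space of $\boldsymbol{H}$, so nothing forces the principal axes inside that subspace to align with the individual $\boldsymbol{z}^{(k)}$ rather than some rotation of them. That is indeed the crux, and your two candidate fixes do not quite land. ``Orthonormal columns'' or ``fixed Frobenius norm'' would make $\boldsymbol{H}\boldsymbol{H}^\top$ (respectively, a scaling of it) a projection onto the column space, with a flat spectrum there; PCA would then be unable to single out any particular basis, so the conclusion could not hold as stated. The ``inherited column-by-column'' idea is closer but unmotivated by the hypotheses you have written down.

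The paper's formal theorem resolves this with a specific constraint you did not anticipate: $\boldsymbol{H}$ minimizes $E_\mathcal{G}$ subject to $\sigma_k(\boldsymbol{H}) \geq \epsilon_k$ for a \emph{strictly decreasing} sequence $\epsilon_1 > \cdots > \epsilon_s > 0$. The engine of the proof is then the SVD identity
\begin{align}
E_\mathcal{G}(\boldsymbol{H}) = \sum_{k} \sigma_k^2\, E_\mathcal{G}(\boldsymbol{u}_k),
\end{align}
where $\boldsymbol{u}_k$ are the left singular vectors. This splits the energy across singular directions with weights $\sigma_k^2$. At the optimum, each $\sigma_k$ is pushed down to $\epsilon_k$ (or to zero beyond $s$), so the weights are strictly decreasing; then Fan's trace-minimum theorem (applied via an Abel-sum over the partial sums $\sum_{k\le s'} E_\mathcal{G}(\boldsymbol{u}_k)$) forces $\boldsymbol{u}_k = \boldsymbol{z}^{(k)}$ for $k \in [s]$, exactly because a larger weight must be paid on the lower-energy eigenvector. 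This is precisely the information your column-space reduction discards: the singular \emph{values}, not just the singular \emph{subspace}, are constrained, and the weighted form of the energy ties each singular direction to its matching spectral embedding. Once $\boldsymbol{u}_k = \boldsymbol{z}^{(k)}$ is in hand, your final step is essentially the paper's: centering annihilates $\boldsymbol{z}^{(1)} \propto \boldsymbol{1}$, and since the remaining $\boldsymbol{z}^{(k)}$ are already orthogonal to $\boldsymbol{1}$, the top eigenvectors of the centered Gram matrix are $\boldsymbol{z}^{(2)}, \boldsymbol{z}^{(3)}, \ldots$ in order. So your setup and your closing step match the paper, but the load-bearing middle --- the $E_\mathcal{G}(\boldsymbol{H}) = \sum_k \sigma_k^2 E_\mathcal{G}(\boldsymbol{u}_k)$ decomposition plus the constraint that makes the $\sigma_k$ strictly ordered --- is missing, and without it the statement does not follow.
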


See App.~\ref{sec:proof-of-structrual-pca} for the formal version and proof of Theorem \ref{thm:structrual-pca}. See also Tab.~\ref{tab:pc_vs_spectral_embeddings} for an empirical validation of the theorem, wherein we show the principal components align very well with spectral embeddings of the graph.

\begin{figure}
    \centering
    \begin{minipage}{0.48\textwidth}
    \centering
    \includegraphics[width=0.7\textwidth]{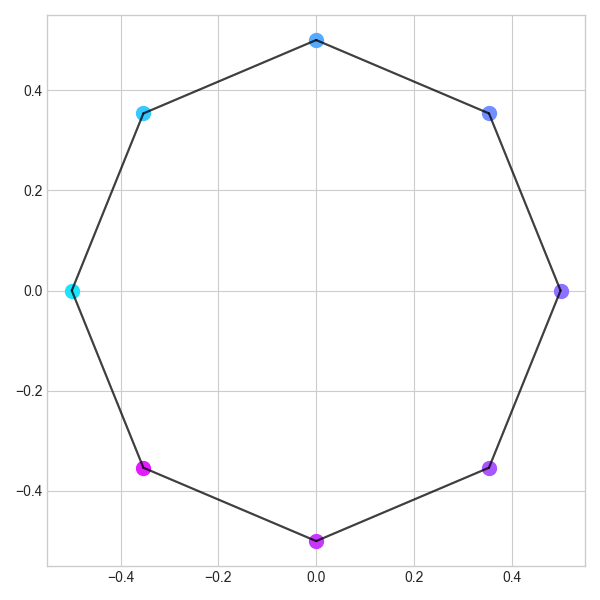}
    \caption{\textbf{Spectral embedding of a ring graph.}}
    \label{fig:spectral-embedding-ring}
    \end{minipage}
    \hfill
    \begin{minipage}{0.48\textwidth}
    \centering
    \includegraphics[width=0.7\textwidth]{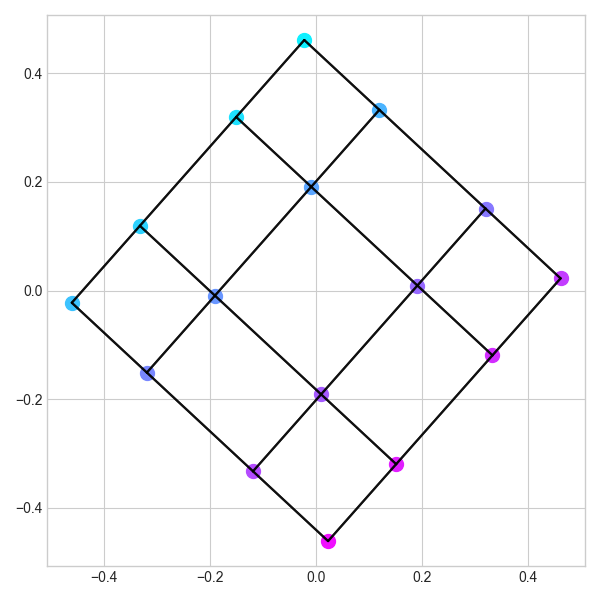}
    \caption{\textbf{Spectral embedding of a grid graph.}}
    \label{fig:spectral-embedding-grid}
    \end{minipage}
\end{figure}

\subsection{Energy Minimization and Graph Connectivity}
Given the relationship between spectral embeddings (i.e., energy minimizers) and the principal components observed in our results (Figs.~\ref{fig:board_pca}, \ref{fig:ring_pca}), we claim that the model's inference of the underlying structure is akin to an implicit energy minimization.
To further analyze the implication of this claim, we show that the moment at which we can visualize a graph using PCA is the moment at which the model has found a large connected component (i.e., the graph's structure).
Specifically, consider an unconnected graph $\mathcal{\hat{G}}$, i.e., $\mathcal{\hat{G}}$ has multiple connected components.
Then, there are multiple degenerate solutions to the energy minimization problem, which will be found by PCA.
Specifically, suppose $\mathcal{\hat{G}}$ has $q$ connected components, with $U_i$ denoting the set of nodes of the $i$-th component.
Then we can construct the first $q$ energy minimizers as follows: $\forall i \in [q]$, let the $j$-th value of ${\boldsymbol{z}}^{(i)}$ be
\begin{align}
z^{(i)}_j = \begin{cases}-\alpha_i & j \in \displaystyle \bigcup_{k=1}^{i-1} U_k \\ 1 & \text{otherwise},\end{cases}
\end{align}
where $\alpha_1 = 1$ and $\alpha_i = \frac{ \sum_{{k'} = i}^q \sum_{j \in U_{k'}}z^{(i-1)}_{j'}}{\sum_{k = 1}^{i-1} \sum_{j \in U_{k}}z^{(i-1)}_{j}}$ for $i \in [q] \setminus \{1\}$.

It is easy to check that each ${\boldsymbol{z}}^{(i)}$ constructed above for $i \in [q]$ has $0$ energy, and is thus a global minimizer of $E_\mathcal{\hat{G}}$.
Moreover, all ${\boldsymbol{z}}^{(i)}$'s are orthogonal to each other and hence satisfy our definition of the first $q$ energy minimizers.
It is important to notice that these ${\boldsymbol{z}}^{(i)}$'s for $i \in [q]$ contain no information about the structure of the graph, other than identifying each connected component.
Theorem \ref{thm:structrual-pca-full} tells us that the principal components of a non-degenerated (rank $s$ where $s > 1$) solution ${\boldsymbol{H}}$ that minimizes the energy will be ${\boldsymbol{z}}^{(2)}\cdots {\boldsymbol{z}}^{(s+1)}$.
\textit{Thus, if the graph is unconnected, then the energy-minimizing representations will be dominated by information-less principal components, in which we should not expect any meaningful visualization.}
The acute reader may recall that the first minimizer ${\boldsymbol{z}}^{(1)}$ is a trivial solution of the energy minimization that assigns the same value to every node. Conveniently, the above argument also implies that this is not a concern: PCA will rule out this degenerate solution as demonstrated in Theorem~\ref{thm:structrual-pca-full}.
\vspace{-0.5em}

\paragraph{In-context emergence: A hypothesis.} 
Our results in Fig.~\ref{fig:trivial_solution} showed an intriguing breakpoint that is reminiscent of a second-order phase transition (i.e., an undefined second derivative).
As shown in Fig.~\ref{fig:grid_percolation}, we in fact find this behavior is extremely robust across graphs of different sizes, and shows a power-law scaling trend with increasing graph size (see App.~\ref{appx_subsec:additional_percolation_verifications} for several more results in this vein, including different graph topologies).
Given the relationship offered between energy minimization and discovery of a connected component (graph structure) in our analysis above, a possible framework to explain these results may be the problem of bond-percolation on a graph~\citep{newmanStructureFunctionComplex2003, hooyberghsPercolationBipartiteScalefree2010}: in bond-percolation, one starts with an unconnected graph and slowly fills edges to connect its nodes; as edges are filled, there is a second-order transition after which a large connected component emerges in the graph.
The nature of the transition observed in our experiments (Fig.~\ref{fig:grid_percolation}) and the theoretical connection between energy minimization and existence of a connected component provide some evidence towards the plausibility of this hypothesis.
However, we believe the analogy is still loose, for our graph sizes are relatively small (likely causing significant finite-size effects) and the experiments need to corroborate any scaling theory of the transition point from percolation literature would require running graphs with at least 2 orders-of-magnitude difference in their sizes. 
However, the consistency of the hypothesis with our empirical results and analysis implies that investigating it further may be fruitful.

\begin{figure}
\begin{center}
\includegraphics[width=0.89\textwidth]{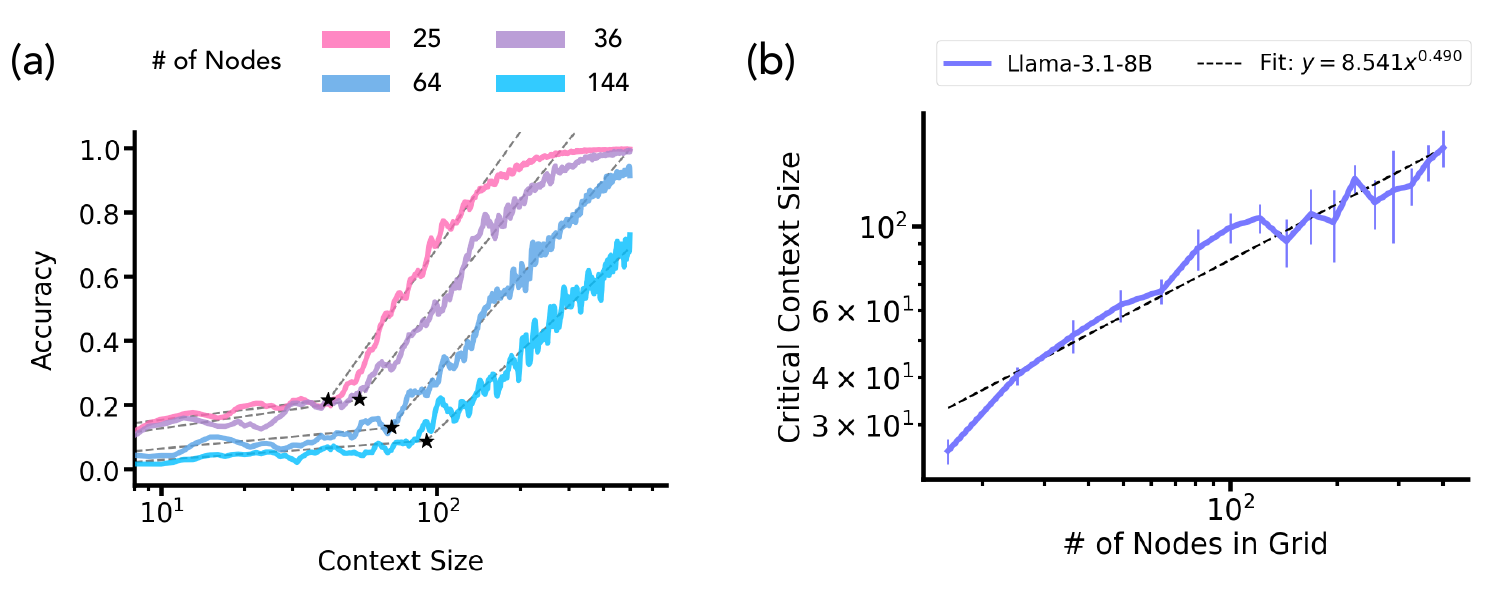}
\end{center}
\vspace{-10pt}
\caption{\label{fig:grid_percolation}\textbf{In-context emergence.} We analyze the in-context accuracy curves as a function of context-size inputted to the model. The graph used in this experiment is an $m\times m$ grid, with a varying value for $m$. (a) The rule following accuracy of a graph tracing task. The accuracy show a two phase ascent. We fit a piecewise linear function to the observed ascent to extract the transition point, which moves rightwards with increasing graph size. (b) Interestingly, the transition point scales as a power-law in $m$, i.e., the number of nodes in the graph. 
\vspace{-10pt}
}
\end{figure}

\section{Related Work}
\label{sec:related_work}
\paragraph{Model Representations.}
Researchers have recently discovered numerous structured representations in neural networks.
\cite{mikolov2013efficientestimationwordrepresentations} suggests that concepts are \emph{linearly} represented in activations, and \cite{park2024linearrepresentationhypothesisgeometry} more recently suggests this may be the case for contemporary language models.
Numerous researchers have found concrete examples of linear representations for human-level concepts, including ``truthfulness'' \citep{ burns2022discovering, li2023inferencetime,marks2024geometrytruthemergentlinear}, ``refusal'' \citep{arditi2024refusal}, toxicity \citep{lee2024mechanistic}, sycophancy \citep{rimsky-etal-2024-steering}, and even ``world models'' \citep{li2022emergent, nanda2023emergent}.
\cite{park2024geometry} finds that hierarchical concepts are represented with a tree-like structure consisting of orthogonal vectors.
A relevant line of work includes that of \cite{todd2023function} and \cite{hendel-etal-2023-context}.
Both papers find that one can compute a vector from in-context exemplars that encode the task, such that adding such a vector during test time for a new input can correctly solve the task.
Language models do not always form linear representations, however.
\cite{engels2024languagemodelfeatureslinear} find circular feature representations for periodic concepts, such as days of the week or months of the year, using a combination of sparse autoencoders and PCA.
\cite{csordas2024recurrent} finds that recurrent neural networks trained on token repetition can either learn an ``onion''-like representation or a linear representation, depending on the model's width.
\textit{Unlike such prior work, we find that task-specific representations with a desired structural pattern can be induced in-context.}
To our knowledge, our work offers the first such investigation of in-context representation learning.
\vspace{-1em}

\paragraph{Scaling In-Context Learning}
Numerous works have demonstrated that in-context accuracy improves with more exemplars \citep{NEURIPS2020_1457c0d6, lu-etal-2022-fantastically, bigelowcontext}.
With longer context lengths becoming available, researchers have begun to study the effect of \emph{many-shot} prompting (as opposed to few-shot) \citep{agarwal2024manyshotincontextlearning, anil2024many, li2023context}.
For instance, \cite{agarwal2024manyshotincontextlearning} reports improved performance on ICL using hundreds to thousands of exemplars on a wide range of tasks.
Similarly, \cite{anil2024many} demonstrate the ability to jail-break LLMs by scaling the number of exemplars.
Unlike such work that evaluates model behavior, \textit{we study the effect of scaling context on the underlying representations}, and provide a framework for predicting when discontinuous changes in behavior can be expected via mere context-scaling.
\vspace{-1em}

\paragraph{Synthetic Data for Interpretability}
Recent works have demonstrated the value of interpretable, synthetic data generating processes for understanding Transformer's behavior, including in-context learning \citep{park2024competition, rameshcompositional, garg2023transformerslearnincontextcase}, language acquisition \citep{lubana2024percolation, qin2024sometimes, allen2023physics1}, fine-tuning \citep{jain2023mechanistically, lubana2023mechanistic, juneja2022linear}, reasoning abilities \citep{prystawski2024think, khonatowards, wen2024transformers, liu2022transformers}, and knowledge representations \citep{nishi2024representation, allen2023physics}. 
While prior work typically pre-trains Transformers on synthetic data, \textit{we leverage synthetic data to study representation formation during in-context learning in pretrained large language models}.

\section{Discussion}
\label{sec:discussion}
In this work, we show that LLMs can flexibly manipulate their representations from semanatics internalized based on pretraining data to semantics defined entirely in-context.
To arrive at these results, we propose a simple but rich task of graph tracing, wherein traces of random walks on a graph are shown to the model in-context.
The graphs are instantiated using predefined structures (e.g., lattices) and concepts that are semantically interesting (e.g., to define nodes), but meaningless in the overall context of the problem.
Interestingly, we find the ability to flexibly manipulate representations is in fact emergent with respect to context size---we propose a model based on energy minimization to hypothesize a mechanism for the underlying dynamics of this behavior.
These results suggest context-scaling can unlock new capabilities, and, more broadly, this axis may have as of yet been underappreciated for improving a model.
\textit{In fact, we note that, to our knowledge, our work is to first to investigate the formation of representations entirely in-context.}
Our study also naturally motivates future work towards formation of world representations \cite{li2023emergent} and world models \citep{ha2018world} in-context, which can have significant implications toward building general and open-ended systems, as well as forecasting its safety concerns.
We also highlight the relation of our experimental setup to similar tasks studied in neuroscience literature~\cite{humans1, humans4, humans3}, wherein humans are shown random walks of a graph of visual concepts; fMRI images of these subjects demonstrate the formation of a structured representation of the graph in the hippocampal–entorhinal cortex, similar to our results with LLMs.

\section*{Acknowledgments}
We greatly thank the \href{https://ndif.us/} {National Deep Inference Fabric (NDIF) pilot program} \citep{fiottokaufman2024nnsightndifdemocratizingaccess}, especially, Emma Bortz, Jaden Fiotto-Kaufman, Adam Belfki, David Bau, and the team who provided us with access to representations of Llama models, making this work possible.
CFP and HT acknowledge the support of Aravinthan D.T. Samuel, Cecilia Garraffo and Douglas P. Finkbeiner. CFP, ESL, KN, MO, and HT are supported by the CBS-NTT Program in Physics of Intelligence.
AL and MW acknowledges support from the Superalignment Fast Grant from OpenAI.
MW also acknowledges support from Effective Ventures Foundation, Effektiv Spenden Schweiz, and the Open Philanthropy Project. Part of the computations in this paper were run on the FASRC cluster supported by the FAS Division of Science Research Computing Group at Harvard University. ESL thanks Eric Bigelow for crucial discussions that helped define several hypotheses pursued in this work, and the Harvard CoCoDev lab (especially Peng Qian) and Talia Konkle for feedback on an earlier version of the paper. CFP thanks Zechen Zhang, Eric Todd, Clement Dumas, and Shivam Raval for useful discussions. All authors thank David Bau and his lab for useful feedback on the paper's results.

\section*{Author Contributions}
CFP, AL, ESL, and HT conceived the in-context graph traversal task, inspired by discussions and experimentations with MO and KN in a related work on pretraining. CFP and AL co-led experiments, where CFP discovered the in-context learning of the ring-structured representation with input from HT, kicking off this study. AL suggested extending this to grid structures to CFP connecting it to world representations, and ESL proposed hexagonal configurations. ESL hypothesized in-context transitions with increased context and percolation mechanism. CFP conducted semantic overriding and percolation experiments, while AL performed accuracy-energy experiments and causal interventions, with CFP, AL, and ESL developing transition point detection methods and contributing to energy normalization. YY formulated the energy minimization theory and its proofs, collaborating with ESL to connect the framework to component existence and demonstrate PCA's optimality. MW provided alternative hypotheses for experimental robustness. CFP, AL, ESL, YY, and HT co-developed an initial manuscript with feedback from MW, with ESL leading the final writing and project narrative development. CFP, AL, and HT created figures, while CFP and AL jointly developed the appendix. AL conducted substantial experiments to verify the main claims generalize to other models with CFP's support. HT supervised the project.

\newpage
\bibliography{iclr2025_conference}
\bibliographystyle{iclr2025_conference}

\newpage
\appendix
\section{Additional Experimental Details}
\label{sec:appx_additional_details_for_experiments}

Here we provide some additional details regarding our experimental setups.

\paragraph{Context Windows.}
Our analyses require computing mean token representations $\boldsymbol{h}_i$ for every token $i \in \mathcal{T}$ in our graphs.
To do so, we grab the activations per each token in the most recent context window of $N_w$ tokens.
Because we further require that each token is observed at least once in our window, we use a batch of prompts, where the batch size is equal to the number of nodes in our graph.
For each prompt in the batch, we start our random traversal (or random pairwise sampling) with a different node, ensuring that each node shows up at least once in the context.
In the case when our context length ($N_c$) is longer than the window, we simply use every token ($N_w = N_c$).

\paragraph{Computational Resources.}
We run our experiments on either A100 nodes, or by using the APIs provided by NDIF \citep{fiottokaufman2024nnsightndifdemocratizingaccess}.


\section{The Connection Between Energy Minimization and PCA Stucture }\label{sec:proof-of-structrual-pca}

In this section, for a matrix ${\boldsymbol{M}} \in \mathbb R^{n \times d}$, we use lowercase bold letters with subscript to represent the columns for ${\boldsymbol{M}}$, e.g. ${\boldsymbol{m}}_k$ represents the $k$-th column of ${\boldsymbol{M}}$. Moreover, we use $\sigma_k({\boldsymbol{M}})$ to represent the $k$-th largest singular value of ${\boldsymbol{M}}$ and when ${\boldsymbol{M}}$ is PSD we use $\lambda_k({\boldsymbol{M}})$ to represent the $k$-th largest eigenvalue of ${\boldsymbol{M}}$. Moreover, we use ${\boldsymbol{e}}_k$ to represent a vector with all-zero entries except a $1$ at entry $k$, whose dimension is inferred from context, and ${\boldsymbol{1}}$ to represent a vector with all entries being $1$. For a natural number $n$, we use $[n]$ to represent $\{1,2,\cdots,n\}$.

Furthermore, we use $\left\{ {\boldsymbol{z}}^{(k)} \right\}_{k=1}^n$ to represent the energy minimizers of the Dirichlet energy, defined in Section \ref{sec:emergence}. Let ${\boldsymbol{A}} \in \mathbb R^{n \times n}$ be the adjacency matrix of the graph, ${\boldsymbol{D}} = \mathop{ \mathrm{ diag } }({\boldsymbol{A}} {\boldsymbol{1}})$ be the degree matrix, and ${\boldsymbol{L}} = {\boldsymbol{D}} - {\boldsymbol{A}}$ be the Laplacian matrix. Through an easy calculation one can know that for any vector $ {\boldsymbol{x}}\in \mathbb R^n$,\begin{align}
E_\mathcal G({\boldsymbol{x}}) = \left<{\boldsymbol{x}},{\boldsymbol{L}}{\boldsymbol{x}}\right>.
\end{align}
Therefore, from the Spectral Theorem (e.g. Theorem 2.2.1 in \cite{spielman2019spectral}), we know that ${\boldsymbol{z}}_k$ is the eigenvector of ${\boldsymbol{L}}$ corresponding to $\lambda_{n-k+1}({\boldsymbol{L}}) = E_\mathcal G({\boldsymbol{z}}_k)$.

We will show that, if a matrix ${\boldsymbol{H}} \in \mathbb R^{n \times d}$ minimizes the energy and is non-degenerated (has several distinct and non-zero singular values), then the PCA must exactly give the leading energy minimizers, starting from ${\boldsymbol{z}}_2$.

\begin{theorem}\label{thm:structrual-pca-full}
     Let $\mathcal G$ be a graph and $\epsilon_1 > \epsilon_2 > \cdots > \epsilon_s > 0$ be $s \leq \min\{n,d\} - 1$ distinct positive numbers. Let matrix ${\boldsymbol{H}}\in \mathbb R^{n \times d}$ be the solution of the following optimization problem: \begin{align}
         {\boldsymbol{H}}= \arg\min_{ {\boldsymbol{X}}\in \mathbb R^{n \times d}} E_\mathcal G({\boldsymbol{X}})
    \\ \text{s.t.}\quad \sigma_{k}({\boldsymbol{X}}) \geq \epsilon_k,~ \forall k \in [r],
    \end{align} 
    then the $k$-th principle component of ${\boldsymbol{H}}$ (for $k \in [r]$) is ${\boldsymbol{z}}^{(k+1)}$.
\end{theorem}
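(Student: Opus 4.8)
The plan is to translate the constrained energy-minimization problem into a statement about the eigendecomposition of the Laplacian $\boldsymbol{L}$, exploiting the identity $E_\mathcal{G}(\boldsymbol{X}) = \Tr(\boldsymbol{X}^\top \boldsymbol{L} \boldsymbol{X})$ (the multi-dimensional analog of $E_\mathcal{G}(\boldsymbol{x}) = \langle \boldsymbol{x}, \boldsymbol{L}\boldsymbol{x}\rangle$ from the excerpt). Write the SVD $\boldsymbol{H} = \boldsymbol{U}\boldsymbol{\Sigma}\boldsymbol{V}^\top$; since the energy only depends on the column space weighted appropriately, I first observe that $E_\mathcal{G}(\boldsymbol{H}) = \sum_{k=1}^d \sigma_k^2 \, \langle \boldsymbol{u}_k, \boldsymbol{L}\boldsymbol{u}_k\rangle$ where $\boldsymbol{u}_k$ are the left singular vectors. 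The constraints $\sigma_k(\boldsymbol{X}) \geq \epsilon_k$ only bind for $k \in [s]$ (for $k > s$ the unconstrained optimum is $\sigma_k = 0$), so at the optimum $\sigma_k = \epsilon_k$ for $k \in [s]$ and $\sigma_k = 0$ for $k > s$. Thus $\boldsymbol{H}$ has rank exactly $s$ and the problem reduces to choosing an orthonormal set $\boldsymbol{u}_1, \dots, \boldsymbol{u}_s$ minimizing $\sum_{k=1}^s \epsilon_k^2 \langle \boldsymbol{u}_k, \boldsymbol{L}\boldsymbol{u}_k\rangle$.

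**The core optimization argument.**

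Next I would solve this weighted eigenvalue problem. Since $\epsilon_1^2 > \cdots > \epsilon_s^2 > 0$ are strictly decreasing positive weights, a rearrangement-type argument (or a direct Lagrange/Ky Fan style argument) shows the minimum is attained by pairing the largest weight with the smallest available Laplacian eigenvalue: $\boldsymbol{u}_k$ must be the eigenvector of $\boldsymbol{L}$ for the $k$-th smallest eigenvalue $\lambda_{n-k+1}(\boldsymbol{L})$, i.e., $\boldsymbol{u}_k = \boldsymbol{z}^{(k)}$ in the notation of the paper. The strictness of the inequalities $\epsilon_1^2 > \cdots > \epsilon_s^2$ is what forces the ordering to be exact rather than merely optimal up to rotation within eigenspaces. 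Here one should be slightly careful: the degenerate first minimizer $\boldsymbol{z}^{(1)} = c\boldsymbol{1}$ has energy $0$, so the naive pairing would assign $\boldsymbol{u}_1 = \boldsymbol{z}^{(1)}$. I need to double-check whether the theorem's indexing ($k$-th PC equals $\boldsymbol{z}^{(k+1)}$) means the constraint set is implicitly taken over matrices orthogonal to $\boldsymbol{1}$, or whether the claim is that PCA — which mean-centers — automatically removes the $\boldsymbol{z}^{(1)}$ component. I believe the intended reading is the latter: the raw energy minimizer does include a $\boldsymbol{z}^{(1)}$ direction, but PCA subtracts the column means (a rank-one correction along $\boldsymbol{1}$), leaving principal components $\boldsymbol{z}^{(2)}, \dots, \boldsymbol{z}^{(s+1)}$. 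So the final step is to compute PCA on $\boldsymbol{H}$ explicitly.

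**From SVD to PCA.**

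Finally I would connect the SVD of $\boldsymbol{H}$ to its principal components. PCA diagonalizes the centered Gram matrix $(\boldsymbol{H} - \bar{\boldsymbol{H}})(\boldsymbol{H} - \bar{\boldsymbol{H}})^\top$ where $\bar{\boldsymbol{H}} = \frac{1}{n}\boldsymbol{1}\boldsymbol{1}^\top \boldsymbol{H}$; the principal component directions (in $\mathbb{R}^n$) are its top eigenvectors. Writing $\boldsymbol{H} = \sum_{k=1}^{s+1} \sigma_k \boldsymbol{z}^{(k)} \boldsymbol{v}_k^\top$ (now including $k=1$ with $\boldsymbol{z}^{(1)} \propto \boldsymbol{1}$), centering annihilates exactly the $k=1$ term because all other $\boldsymbol{z}^{(k)} \perp \boldsymbol{1}$, leaving $\boldsymbol{H} - \bar{\boldsymbol{H}} = \sum_{k=2}^{s+1} \sigma_k \boldsymbol{z}^{(k)} \boldsymbol{v}_k^\top$. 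Then the centered Gram matrix is $\sum_{k=2}^{s+1} \sigma_k^2 \boldsymbol{z}^{(k)} (\boldsymbol{z}^{(k)})^\top$, whose eigenvectors ordered by eigenvalue $\sigma_k^2 = \epsilon_k^2$ (recall these are decreasing) are precisely $\boldsymbol{z}^{(2)}, \boldsymbol{z}^{(3)}, \dots$ — so the $k$-th principal component is $\boldsymbol{z}^{(k+1)}$, as claimed.

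**Main obstacle.**

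The hard part will be the core optimization argument: rigorously proving that the weighted trace $\sum_{k=1}^s \epsilon_k^2 \langle \boldsymbol{u}_k, \boldsymbol{L}\boldsymbol{u}_k\rangle$ over orthonormal $\{\boldsymbol{u}_k\}$ is uniquely minimized (up to signs) by the ordered Laplacian eigenvectors. This is a Ky Fan / Schur-Horn type result, but getting \emph{uniqueness} — which is what lets us conclude the principal components are \emph{exactly} the $\boldsymbol{z}^{(k)}$ and not some rotation — requires carefully using the strict separation of both the weights $\epsilon_k^2$ and (generically) the Laplacian eigenvalues, and handling the interaction with the degenerate $\boldsymbol{1}$-direction. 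The rest — the reduction to $\sigma_k = \epsilon_k$ and the SVD-to-PCA bookkeeping — is routine linear algebra.
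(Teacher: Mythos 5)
Your proposal follows essentially the same route as the paper's proof: decompose $E_\mathcal{G}(\boldsymbol{H})=\sum_k\sigma_k^2\,E_\mathcal{G}(\boldsymbol{u}_k)$ via the SVD, argue the constraints force $\sigma_k=\epsilon_k$ for $k\in[s]$ and zero otherwise, invoke a Ky Fan--type theorem (the paper cites Fan 1949, Theorem 1) to pair strictly decreasing weights with the ordered Laplacian eigenvectors so $\boldsymbol{u}_k=\boldsymbol{z}^{(k)}$, and then note that PCA's mean-centering annihilates precisely the $\boldsymbol{z}^{(1)}\propto\boldsymbol{1}$ direction, shifting the index by one. The only stylistic difference is in the final step --- the paper argues variationally via projection operators and induction, whereas you expand the centered Gram matrix explicitly --- but the content and the key lemma are the same, and you correctly flag the uniqueness subtleties (strictly separated weights, possible Laplacian degeneracies) that the paper treats somewhat lightly.
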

\begin{proof}

    We first prove that the leading left-singular vectors of ${\boldsymbol{H}}$ are exactly energy minimizers. Let $r = \min\{n,d\}$. Let the SVD of ${\boldsymbol{H}}$ be ${\boldsymbol{H}} =  {\boldsymbol{U}}{\boldsymbol{\Sigma}}{\boldsymbol{V}}^\top$, where ${\boldsymbol{\Sigma}} = \mathop{ \mathrm{ diag } }\begin{bmatrix}\sigma_1, \sigma_2, \cdots, \sigma_d\end{bmatrix}$ are the singular values of ${\boldsymbol{H}}$, and $ {\boldsymbol{U}}\in \mathbb R^{n \times r}$, $ {\boldsymbol{V}}\in \mathbb R^{r \times d}$.

Let ${\boldsymbol{h}}'_i$ represents the $i$-th row of ${\boldsymbol{H}}$. Notice that \begin{align}
E_\mathcal G({\boldsymbol{H}})& =  \sum_{i,j} {\boldsymbol{A}}_{i,j} \left\|{\boldsymbol{h}}_i' - {\boldsymbol{h}}_j'\right\|^2
\\ & = \sum_{i,j} {\boldsymbol{A}}_{i,j} \left\|  \left( {\boldsymbol{e}}_i - {\boldsymbol{e}}_j\right)^\top {\boldsymbol{H}}\right \| ^ 2
\\ & = \sum_{i,j} {\boldsymbol{A}}_{i,j} \left\|  \left( {\boldsymbol{e}}_i - {\boldsymbol{e}}_j\right)^\top {\boldsymbol{U}} {\boldsymbol{\Sigma}}\right \|^2
\\ & = \sum_{i,j} \sum_{k=1}^r \sigma_k^2 \left<{\boldsymbol{e}}_i - {\boldsymbol{e}}_j, {\boldsymbol{u}}_k\right>^2
\\ & = \sum_{k=1}^r \sigma_k^2 E_\mathcal G({\boldsymbol{u}}_k).
\end{align}

Since $\sigma_k$'s and ${\boldsymbol{u}}_k$'s are independent, no matter what are the values of ${\boldsymbol{u}}_k$, we know that each $\sigma_k$ will take the smallest possible value, and from the given condition, it is $\sigma_k = \epsilon_k, \forall k \in [s]$, and $\sigma_k = 0, \forall k \in [r] \setminus [s]$.

Since ${\boldsymbol{u}}_k$'s are singular vectors, we have ${\boldsymbol{u}}_k$'s are orthogonal to each other. Using Theorem 1 in \cite{fan1949theorem}, we know that for any $s' \in [n]$, the minimizer of $\sum_{k=1}^{s'} E_\mathcal G({\boldsymbol{u}}_k)$ is  ${\boldsymbol{u}}_k = {\boldsymbol{z}}^{(k)}, \forall k \in [s']$. Therefore, it is evident that the minimizer of $\sum_{k=1}^s \sigma_k^2 E_\mathcal G({\boldsymbol{u}}_k)$ must satisfies ${\boldsymbol{u}}_k = {\boldsymbol{z}}^{(k)}, \forall k \in [s]$, since from the above argument of $\sigma_k$'s and the given condition condition we know that $\sigma_1 > \sigma_2 > \cdots > \sigma_s > 0$. 

Now we have proved that ${\boldsymbol{u}}_k = {\boldsymbol{z}}^{(k)}, \forall k \in [s]$. Next we consider the output of PCA. Let ${\boldsymbol{p}}_k$ be the $k$-th principle component output by the PCA of ${\boldsymbol{H}}$. We know that ${\boldsymbol{p}}_k$ is the eigenvector of \begin{align}
{\boldsymbol{C}} = \widehat {\boldsymbol{H}}\widehat {\boldsymbol{H}}^\top
\end{align}
that corresponds to the $k$-th largest eigenvalue of ${\boldsymbol{C}}$, where $\widehat  {\boldsymbol{H}}= {\boldsymbol{H}} - \frac{1}{n}{\boldsymbol{1}}{\boldsymbol{1}}^\top {\boldsymbol{H}}$ is the centralized ${\boldsymbol{H}}$. 

From the Spectral Theorem, we have \begin{align}
{\boldsymbol{p}}_k = \arg\max_{{\boldsymbol{p}} \in \mathbb S^{n-1} \atop {\boldsymbol{p}} \perp {\boldsymbol{p}}_i,\forall i \leq k-1} \left<{\boldsymbol{p}}, {\boldsymbol{C}}{\boldsymbol{p}}\right>. 
\end{align}

Let $J = \mathop{ \mathrm{ span } }\{{\boldsymbol{1}}\}$ be the set of vectors whose every entry has the same value. Let $J^\perp$ be the subspace in $\mathbb R^n$ that is orthogonal to $J$. For a subspace $K$ of $\mathbb R^n$, let $\Pi_K: \mathbb R^n \to \mathbb R^n$ be the projection operator onto $K$. 

We have that \begin{align}
    {\boldsymbol{p}}_1 & = \arg\max_{{\boldsymbol{p}} \in {\mathbb {S}}^{n-1}} \left<{\boldsymbol{p}}, {\boldsymbol{C}} {\boldsymbol{p}}\right>
    \\ & = \arg\max_{{\boldsymbol{p}} \in {\mathbb {S}}^{n-1}}\left< {\boldsymbol{p}}, \left( {\boldsymbol{I}}- \frac 1n {\boldsymbol{1}}{\boldsymbol{1}}^\top\right) {\boldsymbol{H}} {\boldsymbol{H}}^\top\left( {\boldsymbol{I}}- \frac 1n {\boldsymbol{1}}{\boldsymbol{1}}^\top\right)  {\boldsymbol{p}}\right>
    \\ & = \arg\max_{{\boldsymbol{p}} \in {\mathbb {S}}^{n-1}} \left<\Pi_{J^\perp}({\boldsymbol{p}}), {\boldsymbol{H}}{\boldsymbol{H}}^\top \Pi_{J^\perp}({\boldsymbol{p}})\right>
    \\ & = \arg\max_{ {\boldsymbol{p}} \in {\mathbb {S}}^{n-1}\atop {\boldsymbol{p} \perp J}} \left<{\boldsymbol{p}}, {\boldsymbol{H}}{\boldsymbol{H}}^\top {\boldsymbol{p}}\right>,
\end{align}
which, again from Spectral Theorem, is the eigenvector of the second largest eigenvalue of ${\boldsymbol{H}}{\boldsymbol{H}}^\top$, which is ${\boldsymbol{u}}_2 = {\boldsymbol{z}}^{(2)}$. Using an induction and the same reasoning, it follows that for any $k \in [s]$, we have ${\boldsymbol{p}}_k = {\boldsymbol{z}}^{(k+1)}$. This proves the proposition. 
\end{proof}

\clearpage
\section{Additional Results}
\label{appx_sec:additional_results}

\subsection{Detailed Layer-wise Visualization of Representations}
\label{appx_subsec:additional_pca_visuals}

In Figure~\ref{fig:detailed_pca_grid} and Figure~\ref{fig:detailed_pca_hex} we provide additional visualizations per layer for each of our models and each of our data generating processes.

\begin{figure}[h]
    \begin{center}
    \includegraphics[width=0.95\columnwidth]{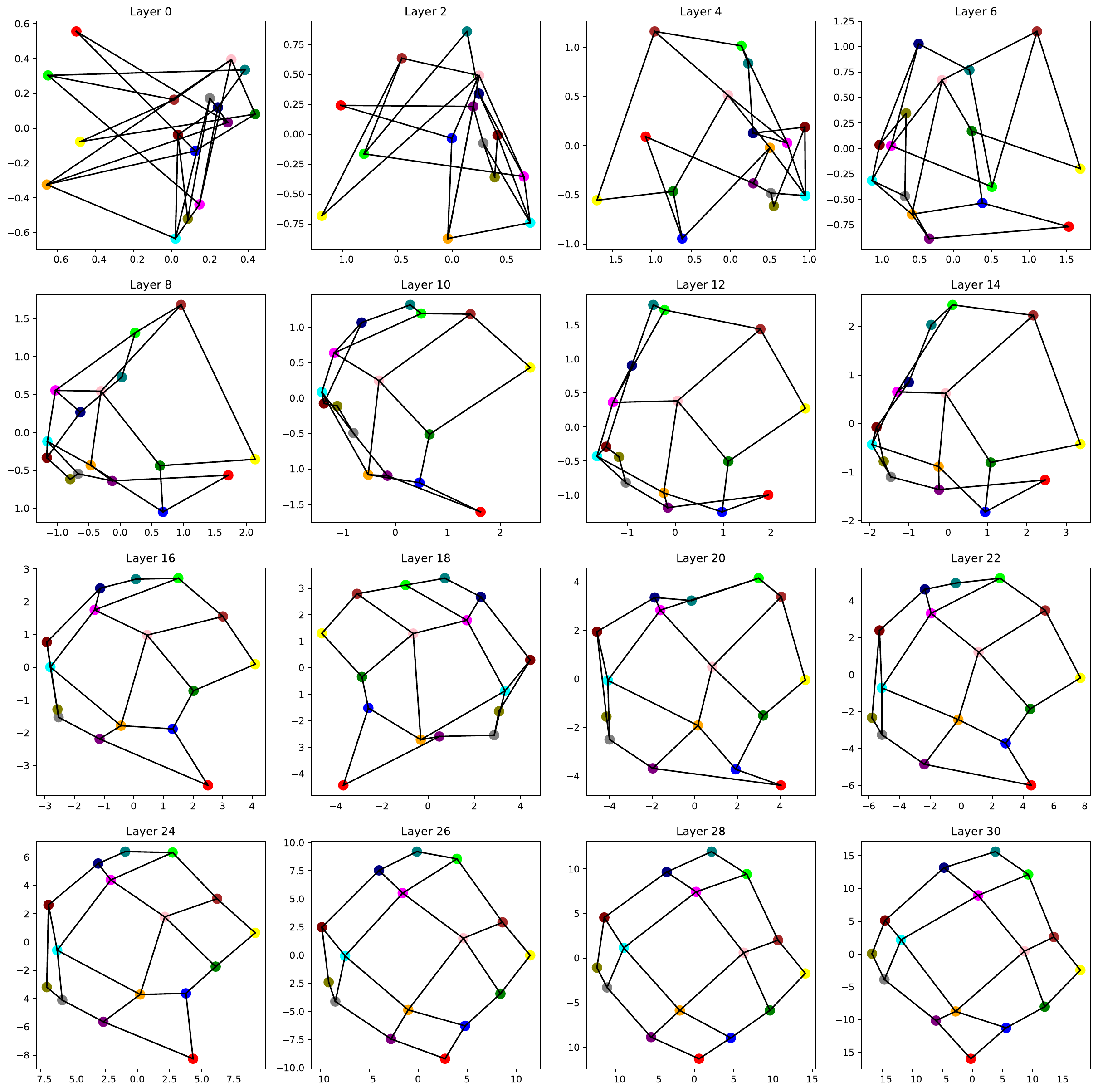}
  \end{center}
  \caption{
    We plot 2d PCA projections from every other layer in Llama3.1-8B \citep{dubey2024llama3herdmodels}, given the grid-traversal task.
    In deeper layers, we can see a clear visualization of the grid.
  }\label{fig:detailed_pca_grid}
\end{figure}

\begin{figure}[h]
    \begin{center}
    \includegraphics[width=0.9\columnwidth]{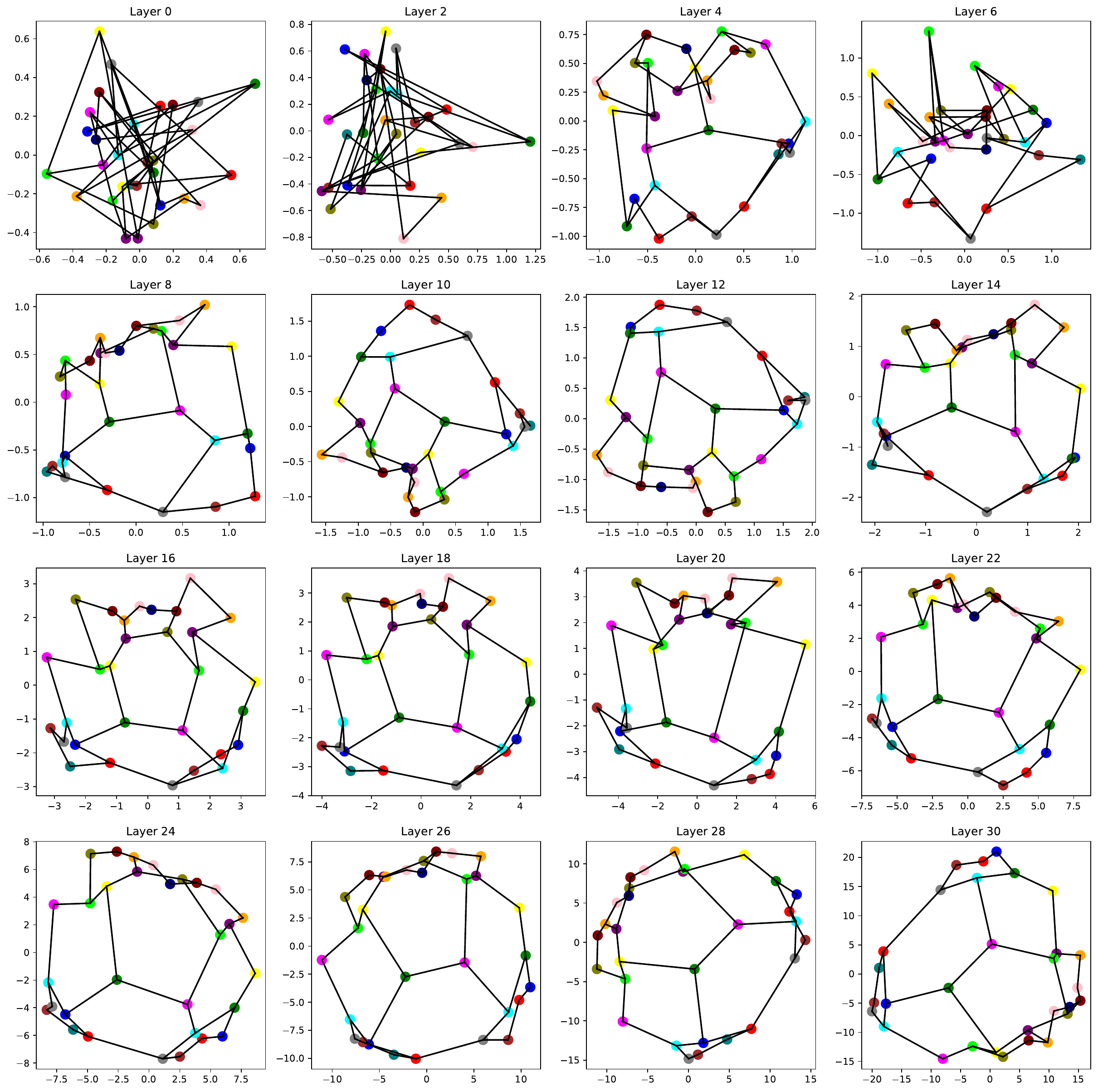}
  \end{center}
  \caption{
    We plot 2D PCA projections from every other layer in Llama3.1-8B \citep{dubey2024llama3herdmodels} for the hexagonal grid task.
  }\label{fig:detailed_pca_hex}
\end{figure}

\clearpage

\subsection{PCA, Dirichlet Energy, and Accuracy Results on Other Models}
\label{appx_subsec:more_models}

Here we provide results from other language models, i.e., Llama3-1B \citep{dubey2024llama3herdmodels}, Llama3-8B-Instruct, Gemma2-2B \citep{gemmateam2024gemma2improvingopen}, and Gemma2-9B.
In Figure~\ref{fig:pca_other_models}, we plot the 2d PCA projections from the last layer of various models for various data generating processes.
In Figure~\ref{fig:acc_vs_energy_other_models}, we plot the normalized Dirichlet energy curves against accuracy for various language models on various tasks.
Across all models and tasks, we see results similar to the main paper.

\begin{figure}[h]
    \begin{center}
    \includegraphics[width=0.95\columnwidth]{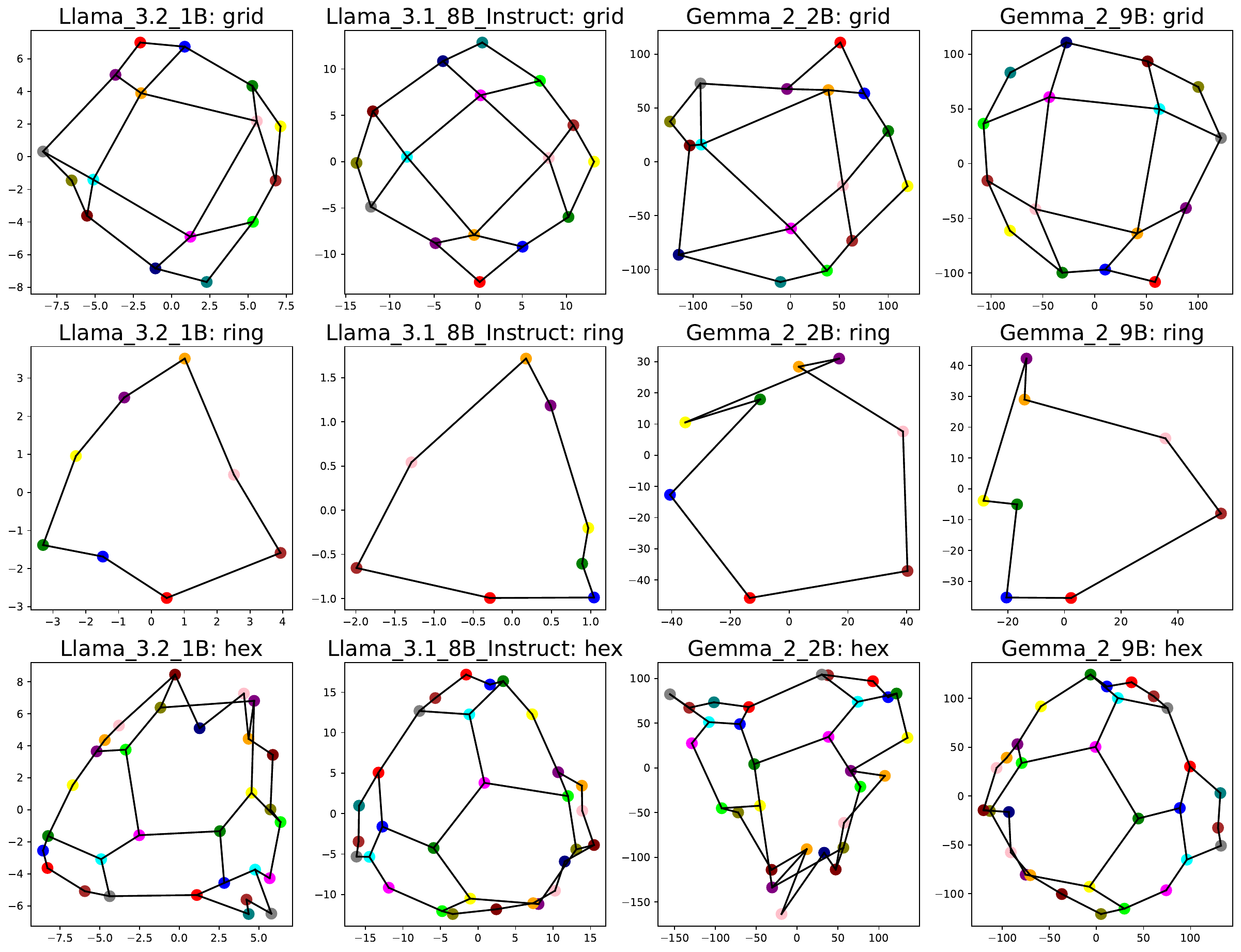}
  \end{center}
  \caption{
    We plot 2d PCA projections from the last layer of various language models, given various data generating processes.
    For the grid and hexagonal graphs, we apply PCA on the last layers.
    For the rings, we visualize layers 14, 10, 16, and 20 respectively.
    Interestingly, for Llama3.2-1B, we find the ring representation in the 2nd and 3rd principal components.
  }\label{fig:pca_other_models}
\end{figure}

\begin{figure}[h]
    \begin{center}
    \includegraphics[width=0.95\columnwidth]{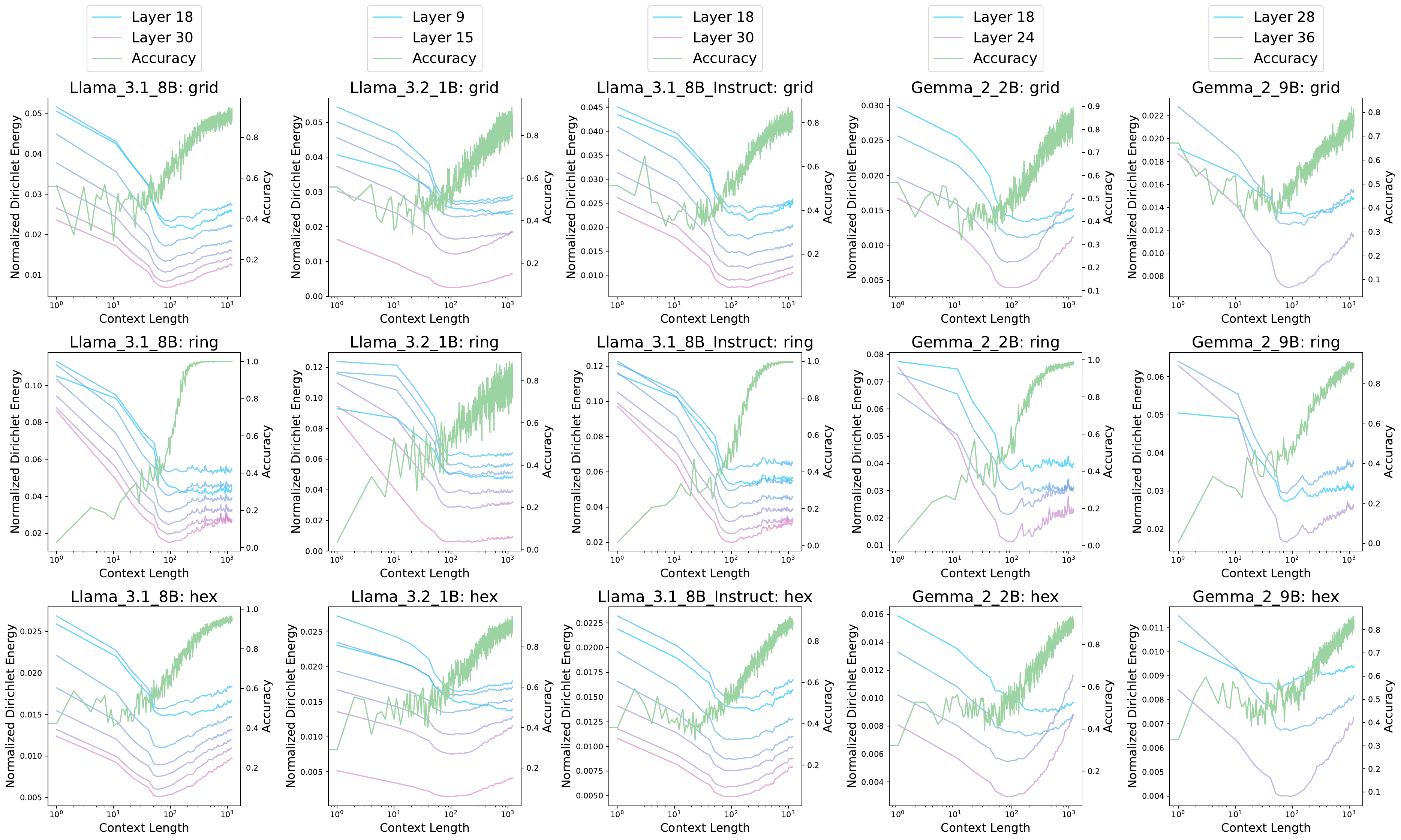}
  \end{center}
  \caption{
    Accuracy versus normalized Dirichlet energy curves for various language models on various tasks.
    For every model and task, we see energy minimized before accuracy starting to improve.
  }\label{fig:acc_vs_energy_other_models}
\end{figure}

\subsection{Standardized Dirichlet Energy}
\label{app:standard_energy}
In Fig.~\ref{fig:acc_vs_energy_other_models_zero_meaned}, we report Dirichlet energy values computed after standardization of representations, i.e., after mean-centering them and normalizing by the standard deviation. This renders the trivial solution to Dirichlet energy minimization infeasible, since assigning a constant representation to all nodes will yield infinite energy (due to zero variance). As can be seen in our results, the plots are qualitatively similar to the non-standardized energy results (Fig.~\ref{fig:acc_vs_energy_other_models}), but more noisy, especially for the ring graphs. This is expected, since standardization can exacerbate the influence of noise, yielding fluctuations in the energy calculation.

\begin{figure}[h]
    \begin{center}
    \includegraphics[width=0.95\columnwidth]{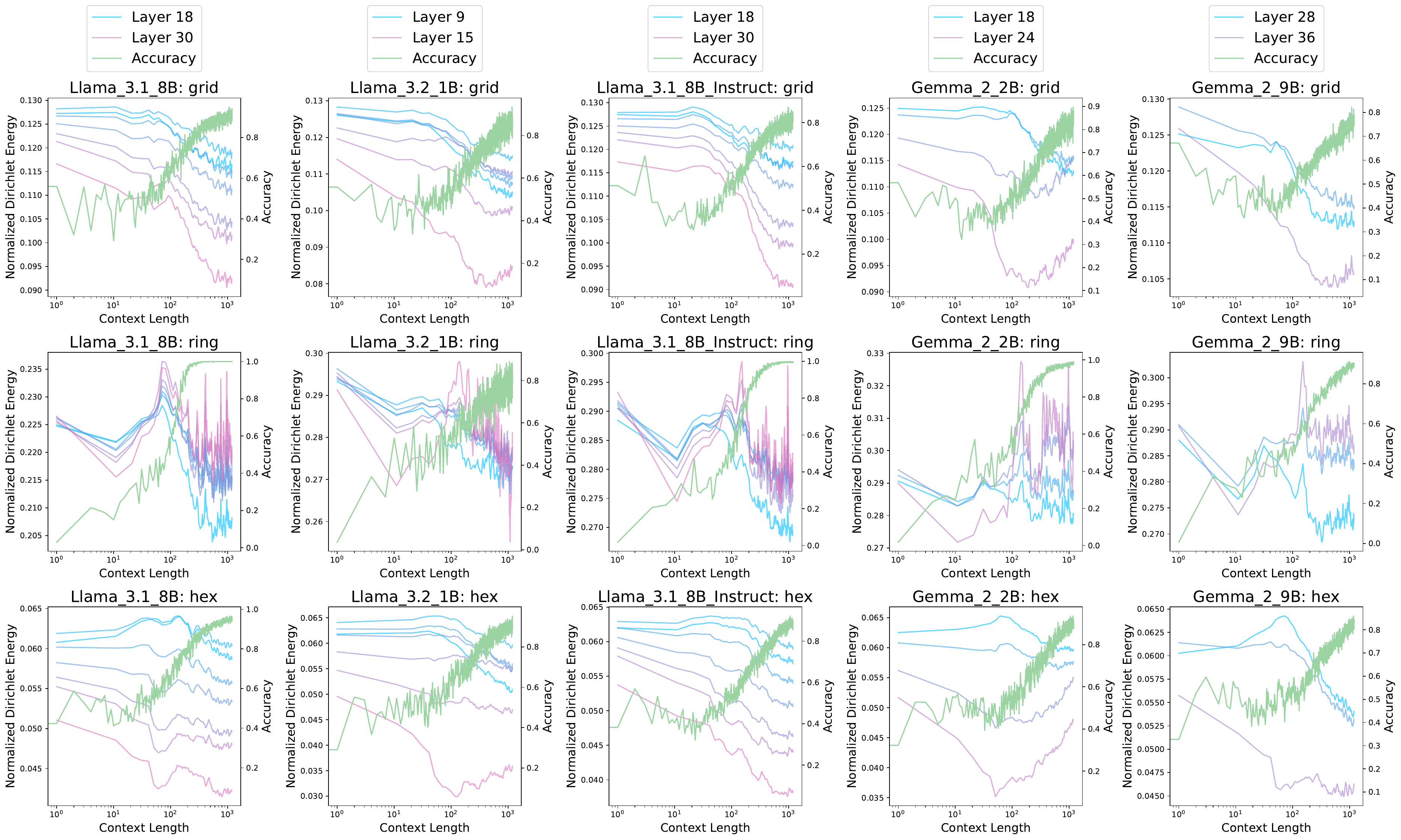}
  \end{center}
  \caption{
    Accuracy versus zero mean centered normalized Dirichlet energy curves for various language models on various tasks.
    Zero mean centering ensures that graph representations are not using the trivial solution to energy minimization (i.e., assigning the same representation for every node).
  }\label{fig:acc_vs_energy_other_models_zero_meaned}
\end{figure}

\clearpage

\subsection{Causal Analysis of Representations}
\label{appx_subsec:interventions}

In this section we report preliminary causal analyses of our graph representations.
While fully understanding the mechanisms behind the formation of such representations, as well as the relationship between said representations and model outputs are an interesting future direction, this is not the focus of our work and thus we only ran proof-of-concept experiments.

With that said, we ask: do the principal components that encode our graph representations have any causal role in the model's predictions?

To test this, we attempt to ``move'' the location of the activations for one node of the graph to another by simply re-scaling its principal components.
Namely, assume activation $\boldsymbol{h}_i^\ell$ corresponding to node $i$ at layer $\ell$.
Say we wish to ``move'' the activation to a different \emph{target node} $j$.
We first compute the mean representation of node $j$ using all activations corresponding to node $j$ within the most recent $N_w$ (= 200) timesteps, notated as $\boldsymbol{\bar{h}}_j$.
Assuming the first two principal components encode the ``coordinates'' of the node, we simply re-scale the principal components of $\boldsymbol{h}_i$ to match that of $\boldsymbol{\bar{h}}_j$. 

We view this approach as rather rudimentary.
Namely, there are likely more informative vectors that encode richer information, such as information about neighboring nodes.
However, we do find that the first two principal components have \emph{some} causal role in the model's predictions.

We test our re-scaling intervention on 1,000 randomly generated contexts.
For each context, assuming our underlying graph has $n$ nodes, we test ``moving'' the activations of the last token $i$ to all $n-1$ other locations in the graph.
We then report the averaged metric across the resulting 1,000 $\times$ $n-1$ testcases.

We report 3 metrics: accuracy (Hit@1), Hit@3, and ``accumulated probability mass'' on valid tokens.
Hit@1 (and Hit@3) report the percentage of times at which the top 1 (top 3) predicted token is a valid neighbor of the target node $j$.
For ``accumulated probability mass'', we simply sum up the probability mass allocated to all neighbors (i.e., valid predictions) of the target node $j$.

Table~\ref{tab:interv_results} reports our results for our ring and grid tasks. 
We include results for re-scaling with 2 or 3 principal components, as well as null interventions and interventions with a random vector.
Overall, we find that the principal components have \emph{some} causal effect on the model's output predictions, but does not provide a full explanation.

\begin{table}[h]
\centering 
\begin{tabular}{c|c c c|c c c|c c c}
 \hline
  & \multicolumn{3}{|c|}{Ring} & \multicolumn{3}{|c|}{Grid} & \multicolumn{3}{|c|}{Hex} \\
  \hline
  & Hit@1 & Hit@3 & Prob & Hit@1 & Hit@3 & Prob & Hit@1 & Hit@3 & Prob \\
  Interv. (n=2) & 0.61 & 0.91 & 0.6 & 0.57 & 0.95 & 0.55 & 0.30 & 0.32 & 0.69 \\
  Interv. (n=3) & 0.77 & 0.96 & 0.76 & 0.68 & 0.98 & 0.65 & 0.42 & 0.46 & 0.82 \\
  Null Interv. & 0.20 & 0.50 & 0.20 & 0.17 & 0.33 & 0.16 & 0.07 & 0.20 & 0.05 \\
  Random Interv. & 0.17 & 0.47 & 0.19 & 0.16 & 0.37 & 0.17 & 0.06 & 0.18 & 0.05 \\
\hline
\end{tabular}
\caption{
Intervention results for our ring and grid tasks.
We demonstrate that often times, simply re-scaling the principal component for each token representation can ``move'' the token to a different position in the graph.
However, we note that our simple re-scaling approach does not perfectly capture a causal relationship between principal components and model predictions.
}
\label{tab:interv_results}
\end{table}

\subsection{Empirical Similarity of Principal Components and Spectral Embeddings}
\label{appx_subsec:pc_vs_spectral_embeddings}

Theorem~\ref{thm:structrual-pca} predicts that if the model representations are minimizing the Dirichlet energy, the first two principal components will be equivalent to the spectral embeddings ($\boldsymbol{z}^{(2)}, \boldsymbol{z}^{(3)}$.

Here we empirically measure whether the first two principal components are indeed equivalent to the spectral embeddings.
In Table~\ref{tab:pc_vs_spectral_embeddings}, we measure the cosine similarity scores between the principal components and spectral embeddings.

\begin{table}[h]
\centering 
\begin{tabular}{|c|c|c|}
 \hline
  & $ | \cos(\text{PC}~ 1, \boldsymbol{z}^{(2)}) |\bigg.$ & $| \cos(\text{PC} 2, \boldsymbol{z}^{(3)}) |$ \\
  \hline
  Grid & 0.950 & 0.954 \\
  Ring & 0.942 & 0.930 \\
  Hex  & 0.745 & 0.755 \\
\hline
\end{tabular}
\caption{
Absolute value of cosine distances of principal components from model activations and spectral embeddings.
We empirically observe that in practice, these coordinates end up being very similar.
For the grid and hexagon, we use principal components from the last layer, while for the ring, we use an earlier layer (layer 10) in which the ring is observed.
}
\label{tab:pc_vs_spectral_embeddings}
\end{table}

\subsection{Accuracy of In-context tasks with a conflicting semantic prior}
\label{appx_subsec:semantic_prior_accuracy}

What would happen when an in-context task which contradicts a semantic prior is given to a model?
Namely, \cite{engels2024languagemodelfeatureslinear} show that words like days of the week have a circular representation.
In our experiment, we randomly shuffle tokens for days of the week (i.e., tokens \{\texttt{Mon, Tue, Wed, Thu, Fri, Sat, Sun}\} to define a new ring, and give random neighboring pairs from the newly defined ring as our in-context task.

Figure~\ref{fig:semantic_ring_acc} demonstrates the accuracy when given an in-context task that is contradictory to a semantic prior.
Interestingly, we first observe the model make predictions that reflects the original semantic prior (pink). This accuracy drops very quickly as the model captures that the semantic rule is not being followed. With more exemplars, we see a slow decay of the remaining semantic accuracy and a transition in the model's behavior as it begins to make predictions that reflect the newly defined ordering of our ring (blue).

\begin{figure}[h]
    \begin{center}
    \includegraphics[width=0.4\columnwidth]{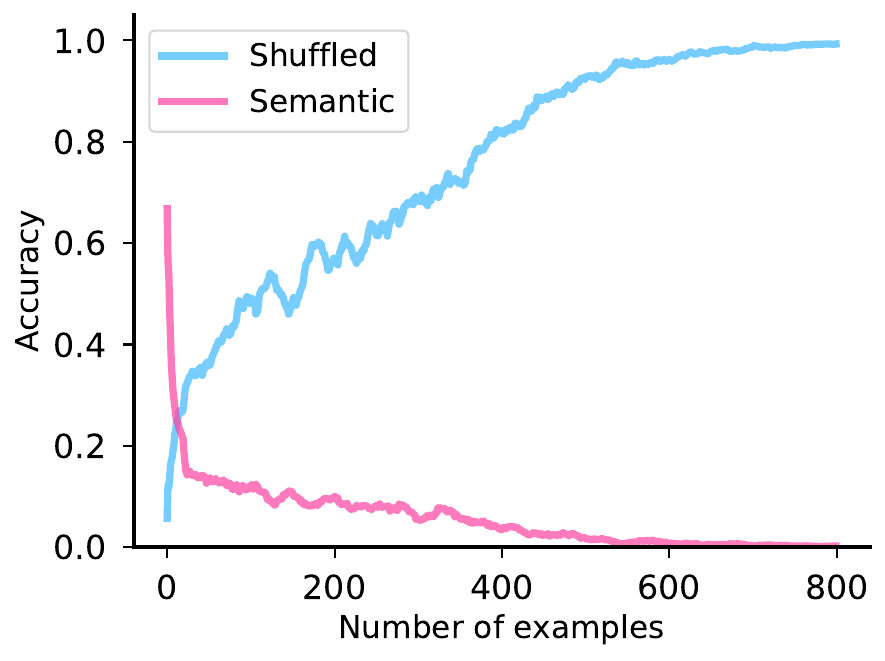}
  \end{center}
  \caption{
  \textbf{In-context structure overrides semantic prior.}
    Given an in-context task that contradicts a model's semantic prior, we observe the model transition from making predictions that adhere to the semantic prior (pink) to predictions that reflect the newly defined in-context task.
    }\label{fig:semantic_ring_acc}
\end{figure}

Furthermore, in Fig.~\ref{fig:semantic_energy}, we quantify the Dirichlet energy computed only from certain PC dimensions. We find that energy minimization happens in the dimensions corresponding to the in-context structure.

\begin{figure}[h]
    \begin{center}
    \includegraphics[width=0.4\columnwidth]{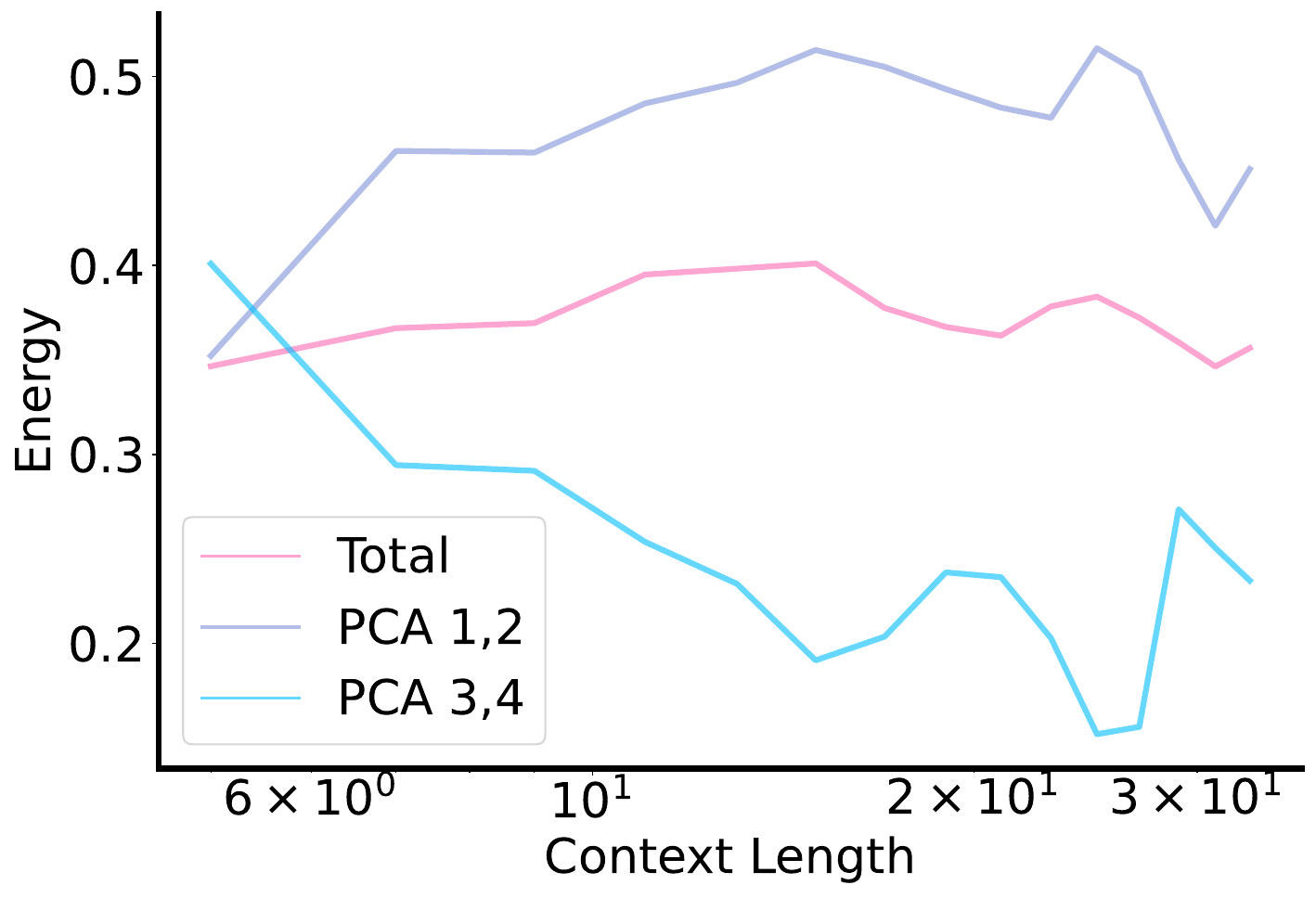}
  \end{center}
  \caption{
  \textbf{Energy minimization happens in the in-context component dimensions.} We show the Dirichlet energy depending on the context given when taking 1) all 2) semantic (PCA 1,2) 3) in-context (PCA 3,4) dimensions. We show that energy minimization happens in PCA 3,4 corresponding to the in-context dimensions.}\label{fig:semantic_energy}
\end{figure}

\subsection{Additional Empirical Verifications of Transition Predictions}
\label{appx_subsec:additional_percolation_verifications}

Here we provide additional details for empirically verifying our predictions for model transitions.

Figures~\ref{fig:hex_fits}, \ref{fig:emergence_grid_detailed}, and \ref{fig:emergence_ring_detailed} demonstrate detailed accuracy curves for a wide range of graph sizes.

\begin{figure}[h]
\begin{center}
\includegraphics[width=0.95\textwidth]{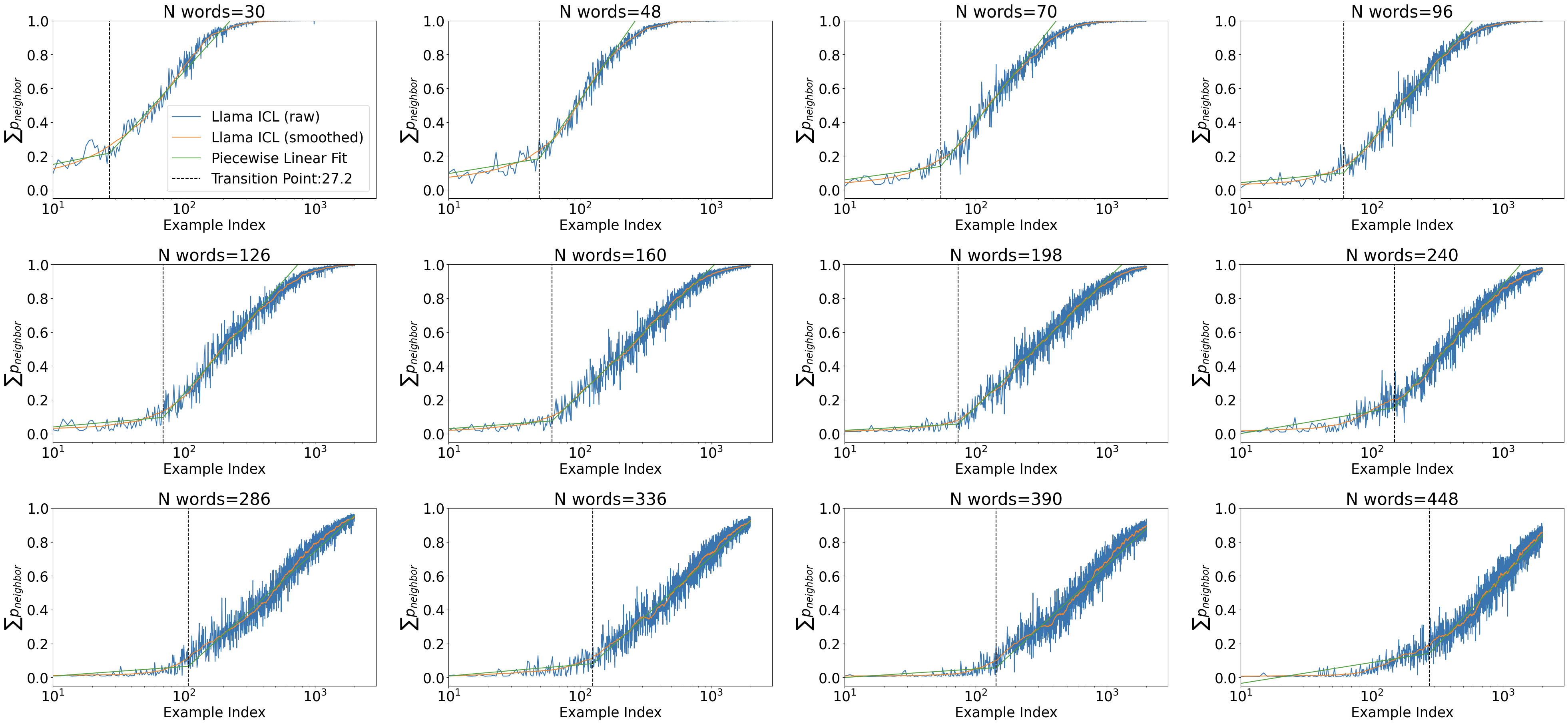}
\end{center}
\caption{\textbf{Emergent behavior for varying task complexity (graph size) for the Hexagonal task.}
We plot the accuracy for varying levels of complexity (graph size) for the hexagonal in-context task.
Interestingly, regardless of graph size, we see an abrupt, discontinuous change in the model's performance.
Figure~\ref{fig:hex_percolation} demonstrates that we can predict when such abrupt change can be expected as a function of task complexity.}
\label{fig:hex_fits}
\end{figure}


\begin{figure}[h]
    \begin{center}
    \includegraphics[width=0.9\columnwidth]{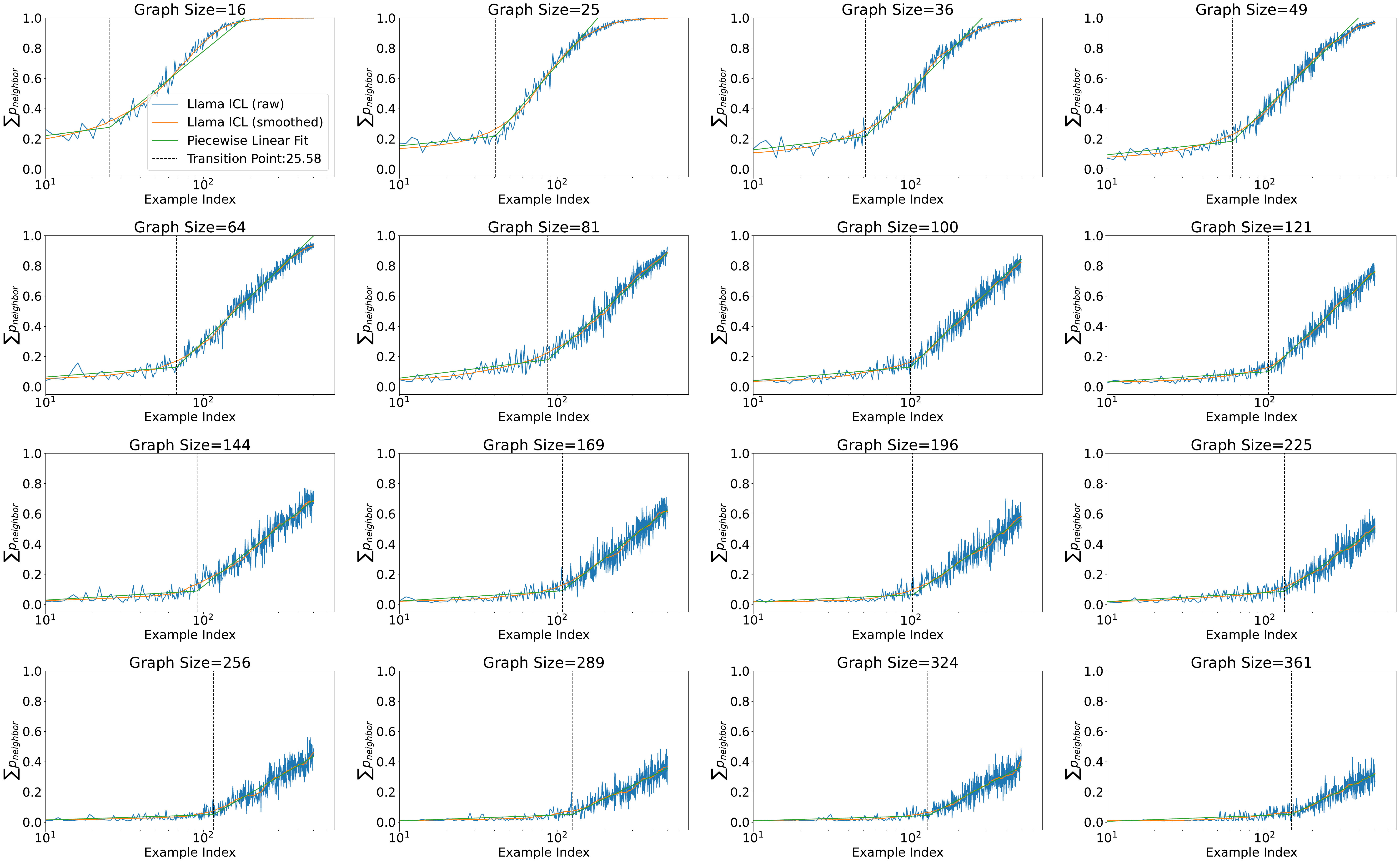}
  \end{center}
  \caption{
  \textbf{Emergent behavior for varying task complexity (graph size) for the grid task.}
We plot the accuracy for varying levels of complexity (graph size) for the grid in-context
task. Interestingly, regardless of graph size, we see an abrupt, discontinuous change in the model’s
performance.
Figure~\ref{fig:grid_percolation} demonstrates that we can predict when such abrupt changes can be expected as a function of task complexity.}
\label{fig:emergence_grid_detailed}
\end{figure}

\begin{figure}[h]
    \begin{center}
    \includegraphics[width=0.9\columnwidth]{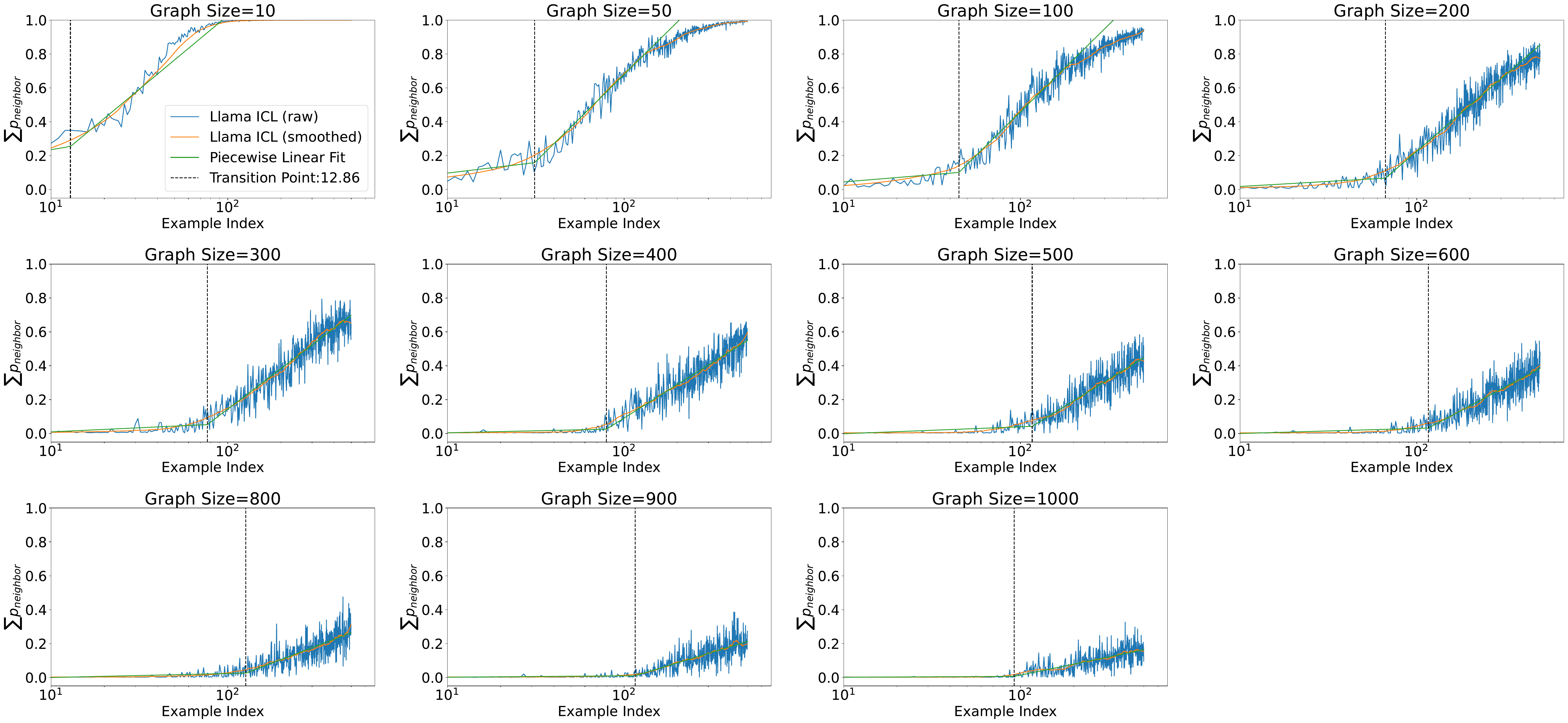}
  \end{center}
  \caption{
  \textbf{Emergent behavior for varying task complexity (graph size) for the ring task.}
We plot the accuracy for varying levels of complexity (graph size) for the ring in-context
task. Interestingly, regardless of graph size, we again see an abrupt, discontinuous change in the model’s
performance.}
\label{fig:emergence_ring_detailed}
\end{figure}

\begin{figure}[h]
\begin{center}
\includegraphics[width=0.95\textwidth]{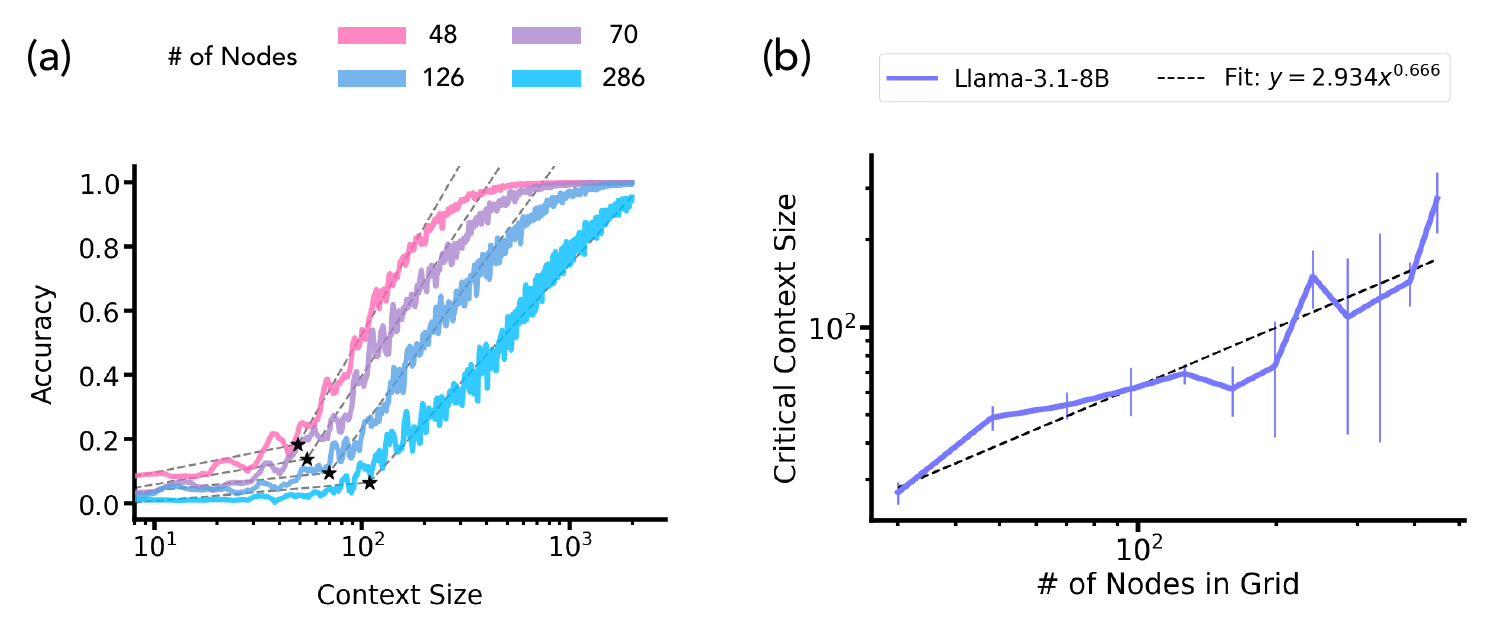}
\end{center}
\caption{\label{fig:hex_percolation}\textbf{In-context emergence in a Hexagonal graph tracing task.} We analyze the in-context accuracy curves as a function of context-size inputted to the model. The graph used in this experiment is an $m\times m$ grid, with a varying value for $m$. (a) The rule following accuracy of a graph tracing task. The accuracy show a two phase ascent. We fit a piecewise linear function to the observed ascent to extract the transition point, which moves rightwards with increasing graph size. (b) Interestingly, the transition point scales as a power-law in $m$, i.e., the number of nodes in the graph. 
}
\end{figure}

\begin{figure}[h]
\begin{center}
\includegraphics[width=0.95\textwidth]{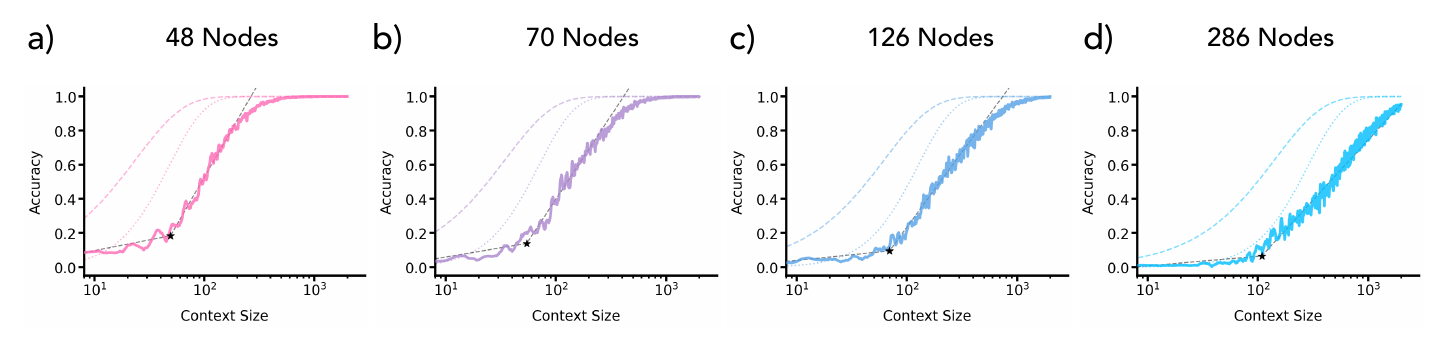}
\end{center}
\caption{\textbf{Hexagonal graph tracing accuracies compared to the memorization solution} The rule following accuracies on the hexagonal graph compared to the memorization model in Sec.~\ref{subsec:accuracy}. Hexagonal graph with a) 48 b) 70 c) 126 d) 286 nodes. Generally we find that the hexagonal graph tracking accuracy from Llama-3.1-8B \citep{dubey2024llama3herdmodels} is lower than the 1,2-shot memorization model, indicating that there might be a different underlying process.}
\label{fig:hex_with_mem}
\end{figure}

\clearpage
\section{Limitations}
\label{appx_sec:limitations}
We emphasize that our work has a few limitations.
Namely, PCA, or more broadly, low dimensional visualizations of high dimensional data can be difficult to interpret or sometimes even misleading.
Despite such difficulties, we provide theoretical connections between energy minimization and principal components to provide a compelling explanation for why structures elicited via PCA faithfully represent the in-context graph structure. 
Second, we find a strong, but nevertheless incomplete, causal relationship between the representations found by PCA and the model's predictions.
We view the exact understanding of how these representations form, and the exact relationship between the representations and model predictions as an interesting future direction, especially given that such underlying mechanism seems to depend on the scale of the context.

\end{document}